\documentclass[twoside]{article}

\usepackage[preprint]{aistats2026}

\usepackage[font=small,labelfont=bf]{caption}

\usepackage{amsmath}
\DeclareMathOperator{\Var}{Var}

\usepackage{booktabs}
\usepackage{amsfonts}   %
\usepackage{amssymb}    %
\usepackage{float}
\usepackage{amsmath,amssymb,amsfonts}

\usepackage{graphicx}
\usepackage{subcaption} %

\usepackage[table]{xcolor} %

\usepackage{graphicx}

\usepackage[round]{natbib}
\usepackage{graphicx}   %
\usepackage{subcaption} %
\graphicspath{{img/}}

\usepackage{booktabs}
\usepackage[table]{xcolor}
\usepackage{makecell}     %
\usepackage{array}
\usepackage{makecell}

\providecommand{\cref}[1]{#1}
\providecommand{\multirow}[3]{#3}
\providecommand{\url}[1]{#1}
\providecommand{\argmax}{}

\newtheorem{theorem}{Theorem}

\begin{document}

\twocolumn[

\aistatstitle{GeoTTER: Leveraging Local Geometry of Optimal Transport for Zero-Shot Classification}

\runningtitle{GeoTTER: Leveraging Local Geometry of Optimal Transport for Zero-Shot Classification (AISTATS'26)}

\aistatsauthor{Wei-Yang Alex Lee \And Rudrasis Chakraborty \And  Vishnu Lokhande }

\aistatsaddress{SUNY Buffalo \And  Lawrence Livermore National Lab \And SUNY Buffalo} ]

\begin{abstract}
We present GeoTTER, a novel framework that redefines optimal transport in the realm of zero-shot classification. Conventional methods often suffer from miscalibration and a lack of adaptability, as they rely on fixed cost matrices derived solely from pre-trained model embeddings. In contrast, GeoTTER addresses these limitations by incorporating two key techniques. First, to alleviate high-frequency label jaggedness (sample-level manifold jitter that assigns neighboring embeddings to different classes), GeoTTER integrates local geometric structure into the optimal transport formulation via graph-Laplacian smoothing, a technique grounded in spectral graph theory that enforces neighborhood consistency. Second, to correct coherent angular drift (a low-frequency orientation bias in which large groups of samples share the same angular offset from their true label prototypes), we fuse clustering-guided cost components with a globally adjusted transport cost, achieving a multi-objective optimization that respects both global distribution constraints and latent data structure. With a median improvement of +6.82\% compared to zero-shot and +2.13\% compared to OTTER, GeoTTER shows robust improvements across a diverse set of benchmarks.

\footnote{Paper accepted at Proceedings of the 29th International Conference on Artificial Intelligence and Statistics (AISTATS) 2026, Tangier, Morocco.}
\end{abstract}

\section{INTRODUCTION}

Optimal transport (OT) has emerged as a powerful and flexible tool for aligning distributions across modalities, showing promise in a wide range of tasks from domain adaptation to zero-shot learning ~\cite{cuturi2013sinkhorndistanceslightspeedcomputation}. However, despite its theoretical elegance, standard OT formulations often suffer from poor calibration, numerical instability, and limited capacity to model local structural nuances within the embedding space. These challenges become particularly pronounced in modern zero-shot and prompt-based classification settings, where the misalignment between label priors and prediction logits can significantly degrade performance.~\cite{liusie2023mitigatingwordbiaszeroshot}

Recent works have attempted to address some of these limitations. For instance, OTTER  ~\cite{shin2024ottereffortlesslabeldistribution} proposes a framework to adapt label distributions in zero-shot models without requiring retraining, highlighting the importance of flexible cost matrix designs. Liusie et al. ~\cite{liusie2023mitigatingwordbiaszeroshot} explore the problem of word bias in prompt-based zero-shot classifiers, advocating for more robust methods that incorporate semantic priors. Bai et al. ~\cite{bai2023adaptingonlinelabelshift} tackle the label shift problem in online settings, providing provable adaptation mechanisms. These lines of work all point to a common need: OT methods must become more structure-sensitive to generalize effectively especially when label distributions are different.

In this paper, we propose \textbf{GeoTTER}, a novel OT-based framework that addresses these challenges by systematically redesigning the OT cost matrix. Our formulation integrates prior label distribution and local geometric structure to enhance calibration. We introduce two core ideas: (1) a \textbf{graph smoothing} operation inspired by spectral graph theory to enforce local consistency, and (2) a \textbf{clustering-guided cost fusion} technique for leveraging latent structure in the embedding space. Together, these components form the backbone of GeoTTER.

\begin{figure*}
    \centering
    \includegraphics[width=1\linewidth]{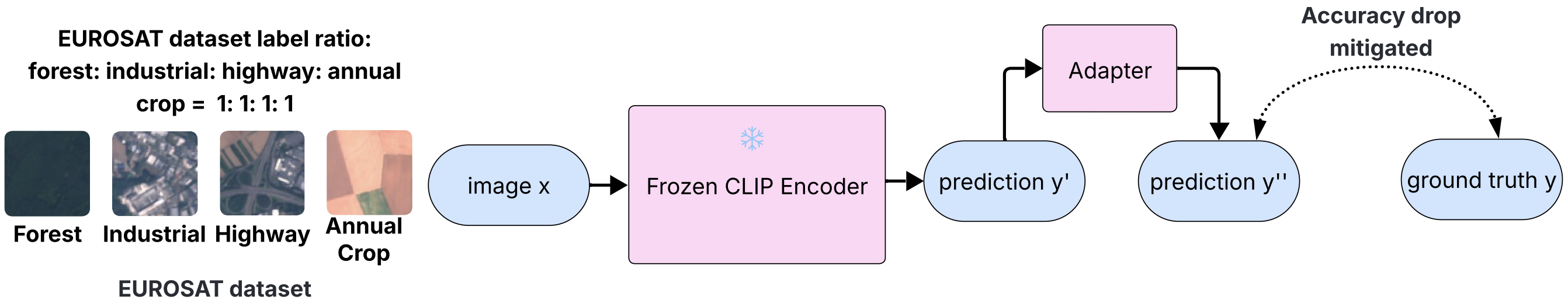}
    \caption{\textbf{Test-time adjustment (TTA) of a Frozen CLIP Encoder nder Label-Distribution Shift.} The target domain is the four-class EUROSAT subset: forest, industrial, highway, annual-crop, whose classes occur in an approximately 1: 1: 1: 1 ratio, unlike the highly skewed class prior in CLIP’s Internet-scale pre-training data (estimated: forest~$\gg$~industrial~$\approx$~highway~$\gg$~annual-crop). An unseen satellite image $x$ is first embedded by the frozen CLIP image encoder, yielding a zero-shot prediction $y'$ that is biased toward the pre-training prior and therefore suffers an accuracy drop. A lightweight adapter, learned on the unlabeled target images at test time, transforms $y'$ into an adjusted prediction $y''$, effectively compensating for the prior mismatch and recovering performance, without updating CLIP’s weights or requiring any test-set labels.}
    \label{fig:enter-label}
\end{figure*}

\textbf{Assumptions \& Scope.} The setup still requires access to the target-domain class prior \( \nu \) at test time (the paper notes that \( \nu \) can be estimated from a small labeled subset or via EM). That said, this limitation is milder than it seems: the theory frames prior misspecification as a bounded marginal-target estimation error \( \varepsilon_\nu \), and once the estimated prior \( \hat{\nu} \) achieves even modest, theoretically-guaranteed accuracy, the excess risk shrinks. Empirically, OTTER already outperforms the zero-shot (no adapter) baseline with such coarse estimation.

On top of OTTER, GeoTTER further mitigates this dependency through clustering-guided cost fusion and graph smoothing; \textit{utilizing local geometry} helps stabilize the transport plan even under imperfect prior knowledge. This local structure encourages consistency within similar samples, reducing sensitivity to \( \hat{\nu} \)'s exactness and making the estimation burden lighter in practice. Our key contributions are as follows:

$\bullet$ We introduce a graph-based smoothing technique that improves zero-shot classification accuracy by enforcing local consistency in transport assignments. \\
$\bullet$ We develop a clustering-aware extension that fuses semantic and geometric signals for improved transport planning. \\
$\bullet$ We conduct extensive experiments across 17 vision benchmarks, demonstrating that GeoTTER consistently outperforms classical OT and recent zero-shot adaptation methods \cite{shin2024ottereffortlesslabeldistribution, liusie2023mitigatingwordbiaszeroshot}.

The remainder of this paper is organized as follows. Section~\ref{sec:framework} introduces the theoretical framework of GeoTTER. Section~\ref{sec:experiments} presents empirical results, and Section~\ref{sec:conclusion} concludes with discussion and future work.

\section{RELATED WORKS}

\textbf{Optimal Transport and Cost Matrix Reformulation}
Optimal transport (OT) redistributes probability mass, but use revealed calibration drift and instability when costs span orders of magnitude. Rescaling logits with temperatures or per-class weights and embedding in the cost matrix compresses extremes and tempers over-confident predictions~\cite{sinkhorn2024}; entropy-regularized Sinkhorn stabilizes optimization with bias–variance trade-off~\cite{sinkhorn2024}. Convex re-parameterizations yield closed-form class-specific offsets, easing tuning and preserving stability in high dimensions~\cite{savare2023}.

\textbf{Graph-Based Methods and Local Manifold Regularization}
Graph methods enforce neighborhood consistency by minimizing a Laplacian quadratic on $k$-NN graphs; diffusion maps and harmonic functions extend this idea to semi-supervised learning. Advances in adaptive neighborhoods~\cite{LMRAG5}, kernel/multi-view fusion~\cite{Kernel2,Consensus3}, and large-scale solvers collectively align probability transport with intrinsic data geometry and improve robustness.

\textbf{Clustering-Guided Cost Fusion for Enhanced Structure Capture}
Beyond OT objectives, incorporating \emph{clustering cues} into cost improves calibration when labels correlate with sub-manifold structure. Pipelines discover clusters via \textit{k}-means, GMMs, or spectral variants and then fuse a cluster-level similarity term with the standard OT cost. This multi-objective view lets the OT term align global class priors while a cluster-aware term captures fine-grained geometry, yielding robust performance on imbalanced or intricately structured data.

\section{PRELIMINARIES}

\label{sec:prelim}

We start by formalizing the Test–Time Adaptation (TTA) problem tackled in
this work and by reviewing two recent baselines that recalibrate model
predictions without touching the backbone parameters.

\subsection{Setting and Notation}
\label{sec:setting}
Let \(\mathcal X=\{x_i\}_{i=1}^{n}\) be an \emph{unlabeled} test batch
and let  \(f\) be a frozen classifier which outputs softmax scores
\(
\{\mathbf p_i\}=\{\operatorname{softmax}\!\bigl(f(x_i)\bigr)\}\subset \Delta^{m-1}.
\)
The backbone model was trained on a source domain whose class prior may differ
from the \emph{target-domain prior}
\(
\boldsymbol{\nu}\in\Delta^{m-1},
\)
assumed to be known\footnote{The prior can be estimated from a small labeled subset or via expectation–maximization; empirical details are given in~\cref{sec:experiments}.}.
Our goal is to transform the scores \(\{\mathbf p_i\}\) into calibrated
probabilities \(\{\tilde{\mathbf p}_i\}\) such that their mean \(\boldsymbol{\nu}\) matches
 and the resulting hard labels are more accurate. For convenience, we write the uniform mass vector
\( \boldsymbol{\mu}=\tfrac1n\mathbf 1_n \)
and use
\(
\langle A,B\rangle
      :=\sum_{ij}A_{ij}B_{ij}
\)
for the inner products of the matrix. Below, we will briefly mention prior-matching and OTTER, two recent work tackling with recalibration of the model prediction in a zero-shot setting. 

\subsection{Prior-Matching Adapter (PM)}
\label{sec:pma}
{\bf PM}~\cite{liusie2023mitigatingwordbiaszeroshot} rescales logits with a per-class weight vector
\(
\mathbf w\in\mathbb R^{m}_{>0}
\)
and a temperature \(T>0\):
  $\tilde{\mathbf p}_i
  \;=\;
  \operatorname{softmax}\!\Bigl(
    \tfrac{1}{T}\,(\mathbf w\odot\mathbf p_i)
  \Bigr),
 $ %
where “\(\odot\)” denotes element-wise multiplication.
The weights are obtained by optimizing the following
 $ \min_{\mathbf w}\;
  \Bigl\|
    \frac1n\sum_{i=1}^{n}\tilde{\mathbf p}_i
    -\,
    \boldsymbol{\nu}
  \Bigr\|_2^{2}.
 $ %
As there are only \(m\) parameters (from the weight vector) that need to be tuned, a small number of first-order steps
suffice. This method treats each sample independently and is sensitive to the choice of \(T\)
and hyper-parameters of the optimizer. 

\textbf{Fenchel PM (FPM)} Building upon Prior Matching, we additionally evaluate Fenchel PM (FPM), which alleviates PM’s gradient-oscillation and local-minima trapping issues; FPM yields markedly better performance than PM, though it still falls short of OTTER (see Appendix for details). FPM introduce an auxiliary probability vector \( \mathbf{p} \in \Delta_m \) and, via the Fenchel--Bregman identity, absorb the softmax into an inner maximization to obtain the saddle-point problem
$
\min_{\mathbf{w} \in \mathbb{R}_{>0}^m} \; \max_{\mathbf{p} \in \Delta_m} \; \left\langle \tfrac{1}{T}(\mathbf{w} \odot \boldsymbol{\theta}), \, \mathbf{p} \right\rangle - \Omega(\mathbf{p}) + \tfrac{\lambda}{2} \| \mathbf{p} - \boldsymbol{\nu} \|_2^2,
\quad \text{where } \Omega(\mathbf{p}) = \sum_{j=1}^m p_j \log p_j.
$
The inner maximizer \( \mathbf{p}^\star(\mathbf{w}) \) satisfies the closed-form fixed point
$
\mathbf{p}^\star = \operatorname{softmax} \left( \tfrac{1}{T}(\mathbf{w} \odot \boldsymbol{\theta}) - \lambda(\mathbf{p}^\star - \boldsymbol{\nu}) \right);
$
eliminating \( \mathbf{p} \) yields a smooth, strongly convex outer objective in \( \mathbf{w} \), hence more stable optimization.

\subsection{Optimal TransporT adaptER (OTTER)}
\label{sec:otter}
OTTER~\cite{shin2024ottereffortlesslabeldistribution} interprets recalibration as a \emph{discrete optimal-transport
(OT)} problem and reallocates the entire batch’s probability mass in zeroshot. Thus it captures the relationship among samples in a batch. Below, we will explain the OT objective.

\paragraph{Transport Problem}
Given source and target marginals
\(
\boldsymbol{\mu}\text{~~and~~}\boldsymbol{\nu}
\) respectively,
we solve the \emph{exact} Kantorovich optimal-transport problem~\cite{kantorovich1942}
$ \pi^\star = \arg\min_{\pi\in\mathbb{R}_+^{n\times m}} \langle \pi, C\rangle ;\text{s.t.}; \pi,\mathbf{1}_m=\boldsymbol{\mu},; \pi^{\top}\mathbf{1}_n=\boldsymbol{\nu}. $
i.e.\ the Earth-Mover’s Distance (EMD) between the two empirical measures~\cite{rubner2000emd}.
For every sample–class pair we set
\(
  C_{ij}=-\log p_i[j].
\)
We compute \(\pi^\star\) with the \emph{network-simplex} algorithm~\cite{orlin1997networkSimplex},  
an efficient specialized linear-programming solver whose practical runtime is near \(O((n+m)^3)\) but sub-cubic on the small batches typical of OTTER. The calibrated label for each sample is then obtained as
\(
  \hat y_i = \arg\max_{j}\,\pi^\star_{ij}.
\) OTTER leverages the interaction among \emph{all} samples, dispenses with hand-tuned hyper-parameters, and unlike neural-network-based adapters, OTTER returns the exact global optimum of a well-posed linear program.

\paragraph{Exact OT / Earth-Mover’s Distance (EMD)}

Set the entropy weight to zero, i.e.\ \(\varepsilon = 0\), to recover the original Kantorovich linear program 
$
\min_{T \ge 0} \;\langle T, C \rangle
\quad\text{s.t.}\quad
T \mathbf{1} = \mathbf{a},\;
T^{\!\top}\mathbf{1} = \mathbf{b}.
$ Without the entropy term the solution \(T\) is \emph{exact} but typically sparse; standard solvers such as network–simplex or interior-point are required, yielding super-cubic time in the worst case.  

\paragraph{Sinkhorn-Regularized OTTER (SK)}

To gauge the impact of entropic regularization, we also evaluate a simple Sinkhorn-based version of OTTER (denoted \textbf{SK}), obtained by replacing the network-simplex solver with Sinkhorn iterations. Add an entropy term~\cite{sinkhorn1967diagonal}
$
\varepsilon \operatorname{KL}(T \| \mathbf{a} \mathbf{b}^\top)
= \varepsilon \sum_{ij} T_{ij} (\log T_{ij} - 1),
$
yielding
$
\min_{T \ge 0} \langle T, C \rangle + \varepsilon \sum_{ij} T_{ij} (\log T_{ij} - 1).
$ The Sinkhorn algorithm scales rows and columns alternately, giving near-linear runtime and a smoother $T$ ~\cite{cuturi2013sinkhorn}; letting $\varepsilon \to 0$ recovers EMD.

\section{THEORETICAL FRAMEWORK FOR GEOTTER}

\label{sec:framework}

Regularizing the transport plan is indispensable because it explicitly constrains the solution space, removing spurious high-frequency modes and yielding a markedly smoother assignment landscape; indeed, replacing vanilla OTTER with its regularized counterpart already delivers the consistent gains reported in Table 1. Our preliminary Sinkhorn baseline corroborates this observation: \textit{the mere addition of an entropic term systematically outperforms its unregulated analogue, suggesting that further, task-aware forms of regularization may unlock additional robustness}. Closer inspection of the error patterns reveals two complementary noise sources: a sample-level jaggedness \textbf{(lNoise)} that perturbs neighboring embeddings independently, and a batch-level drift \textbf{(gNoise)} that shifts logits coherently across larger manifolds. Addressing these dual corruptions requires two orthogonal mechanisms: (i) \textbf{Graph Smoothing} (Sec. 4.1), which averages the transport mass over the k-NN graph and therefore suppresses lNoise while respecting local geometry, and (ii) \textbf{Clustering-guided Cost Fusion} (Sec. 4.2), which re-anchors the cost matrix to cluster centroids, neutralizing the low-frequency bias induced by gNoise. Together, these strategies generalize the benefits hinted at by Sinkhorn, providing a principled, dual-scale regularization framework that underlies the full GeoTTER model. Our framework is derived from non-trivial theoretical insights in optimal transport \cite{peyre2019computational}, convex optimization \cite{boyd2004convex}, and spectral graph theory \cite{chung1997spectral}.

\subsection{Graph‑Laplacian Smoothing of the Transport Plan}
\label{sec:graph_smoothing}

\textbf{Noise Regime} The adjusted transport plan $T^{\star}$ is still vulnerable to \emph{high-frequency label jaggedness}: a sample-level stochastic logit competition in which infinitesimal perturbations, for example activation noise or batch-normalization drift, cause neighboring embeddings to exchange their top-ranked class. The resulting saw-tooth assignment pattern ~\cite{shi2023understanding, liang2022mind} violates manifold smoothness and degrades local calibration. To suppress this noise we enforce neighborhood consistency through graph-Laplacian smoothing.

\textbf{Motivation}\;
Although the adjusted cost matrix produces a globally calibrated
transport plan \(T^{\star}\), local noise in the embedding space can
still yield inconsistent assignments.  To exploit the manifold
structure and suppress this noise, we project \(T^{\star}\) onto the
$k$‑nearest‑neighbor ($k$‑NN) graph via a simple neighborhood
averaging iteration.

\paragraph{Single‑Step Propagation}
Below we give the explicit single‑step update and introduce our
notation.  Construct the $k$‑NN graph
\({\mathcal G}=(\mathcal V,\mathcal E)\) on embeddings
\(\{x_i\}_{i=1}^{n}\) with adjacency
\[
  A_{ij}
  = \mathbf 1\bigl[j\in\mathcal N(i)\bigr],\quad
  D=\mathrm{diag}(A\,\mathbf 1),\quad
  \hat A=D^{-1}A.
\]
Starting from \(T^{(0)}=T^{\star}\), we iterate
\begin{equation}
  T^{(\ell+1)}
    = \alpha\,T^{(\ell)}
    + (1-\alpha)\,\hat A\,T^{(\ell)},
  \quad 0<\alpha\le1,
  \label{eq:smoothing_update}
\end{equation}
where \(\alpha\) balances retention of the original plan against
smoothness over the $k$-NN mean ($\alpha\!\uparrow\;\Longleftrightarrow\;\text{less smoothing}$).
After \(L\) iterations we set
\(T^{\text{smooth}} = T^{(L)}\) and predict
\(\hat y_i = \arg\max_j T^{\text{smooth}}_{ij}\).
Next, we show that this simple iteration is equivalent to a graph
Laplacian regularization problem.

\noindent\textbf{(Multi-GeoTTER) Multi-Step Propagation}\\
We also provide the result of multi-step propagation, and named it as \textbf{Multi-GeoTTER}, as a GeoTTER's variant. Prop repeatedly applies the single-step update in Eq.~\eqref{eq:smoothing_update} for $L$ iterations. This yields progressively broader neighborhood smoothing while keeping the same $\alpha$. We use the final iterate $T^{(L)}$ for prediction.

\paragraph{Graph‑Laplacian Regularization}
From an optimization perspective, repeating
\eqref{eq:smoothing_update} can be seen as solving a Tikhonov‑regularized
least‑squares problem on the graph.

\begin{theorem}[Tikhonov form]
\label{lem:laplacian}
Let \(\mathcal L = I-\hat A\) be the row‑normalized graph Laplacian
and define \(\mu=\tfrac{1-\alpha}{\alpha}\).  Then the fixed point of
\eqref{eq:smoothing_update} minimizes
$
  \min_{T\in\mathbb R^{n\times m}}
    \;\frac12\sum_{i,j}\bigl(T_{ij}-T^{\star}_{ij}\bigr)^{2}
    + \mu\,\mathrm{tr}\!\bigl(T^{\!\top}\mathcal L\,T\bigr).
$
Hence, graph smoothing enforces manifold smoothness by penalizing rapid
variations of \(T\) over \(\mathcal G\).  Thus, repeatedly applying
\eqref{eq:smoothing_update} provides a simple and numerically stable
approximation to this regularized objective.
\end{theorem}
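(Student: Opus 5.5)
The proof rests on matching the first-order optimality condition of the Tikhonov objective with the fixed-point equation of the smoothing iteration. Before doing that, the statement has to be read in the only way under which it is non-vacuous. Taken literally, the fixed point of \eqref{eq:smoothing_update} satisfies $T=\hat A T$, i.e.\ $\mathcal L T=0$. That condition no longer involves $T^\star$; its solutions are the matrices that are constant on each connected component of $\mathcal G$. I will therefore interpret the iteration in its anchored (label-propagation) form
\[
T^{(\ell+1)}=\alpha\,T^{\star}+(1-\alpha)\,\hat A\,T^{(\ell)},
\]
which also starts at $T^{(0)}=T^\star$ and matches \eqref{eq:smoothing_update} at the first step. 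I will show that its fixed point is the minimizer, and remark that the literal iteration is the $L\to\infty$ limit of pure diffusion, so it can only approximate the minimizer for finite $L$.

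\textbf{Step 1 (existence and convergence of the fixed point).} The anchored map is affine with linear part $(1-\alpha)\hat A$. Since $\hat A$ is row-stochastic, $\|\hat A\|_\infty=1$, so the map is a contraction in the induced $\infty$-norm (applied columnwise) with factor $1-\alpha<1$ whenever $\alpha>0$. Banach's theorem then gives a unique fixed point $\bar T$, reached geometrically from $T^\star$. Rearranging $\bar T=\alpha T^\star+(1-\alpha)\hat A\bar T$, dividing by $\alpha$, and using $\mu=(1-\alpha)/\alpha$ gives
\[
(I+\mu\mathcal L)\,\bar T=T^\star .
\]

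\textbf{Step 2 (optimality conditions of the objective).} The gradient of $J(T)=\tfrac12\|T-T^\star\|_F^2+\mu\,\mathrm{tr}(T^\top\mathcal L T)$ is $(T-T^\star)+\mu(\mathcal L+\mathcal L^\top)T$. This is the main obstacle: $\mathcal L=I-D^{-1}A$ is not symmetric in general, so this condition does not coincide with the one in Step 1. I would resolve it by adding the standing assumption that the $k$-NN graph is symmetrized and either $k$-regular (so $D=kI$ and $\mathcal L$ is symmetric), or by replacing $\hat A$ with $D^{-1/2}AD^{-1/2}$ and $\mathcal L$ with $I-D^{-1/2}AD^{-1/2}$. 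In either case $\mathcal L$ is symmetric positive semidefinite, with eigenvalues in $[0,2]$. The stationarity condition then reads $(I+2\mu\mathcal L)T=T^\star$. The factor $2$ is a convention: it disappears if the penalty is written as $\tfrac{\mu}{2}\mathrm{tr}(T^\top\mathcal L T)$, or equivalently if $\mu$ is rescaled. I would state this explicitly rather than hide it.

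\textbf{Step 3 (uniqueness and conclusion).} Under symmetry, $J$ is strictly convex because its Hessian $I+2\mu\mathcal L$ is at least $I$ in the PSD order. The stationary point is therefore the unique global minimizer, $T^{\rm opt}=(I+\mu'\mathcal L)^{-1}T^\star$ with the appropriately scaled $\mu'$. This equals $\bar T$ from Step 1, because $I+\mu\mathcal L$ is invertible (all its eigenvalues are at least $1$). The interpretive sentence in the statement follows from the identity $\mathrm{tr}(T^\top\mathcal L T)=\tfrac12\sum_{i,j}A_{ij}\|T_{i\cdot}-T_{j\cdot}\|^2$ (with $d$-normalization in the normalized case), so the penalty charges differences of transport rows across edges. "Numerically stable approximation" then refers to the geometric convergence rate $(1-\alpha)^L$ obtained in Step 1.
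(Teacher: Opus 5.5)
The paper gives no proof of this theorem. The appendix proves only the cluster-fusion and noise-attenuation bounds, the margin lemmas and the FPM propositions. The main text merely asserts that repeating \eqref{eq:smoothing_update} ``can be seen as'' solving the Tikhonov problem. So there is no argument of the authors to compare against. Your route is the standard one: match the stationarity condition of the objective with the fixed point of the anchored (label-propagation) iteration. It is correct for the repaired statement you formulate.

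More importantly, your argument shows the statement as printed cannot be proved, for three reasons:
\begin{itemize}
\item The literal fixed-point condition $(1-\alpha)\mathcal L T=0$ does not involve $T^\star$.
\item The penalty uses a nonsymmetric $\mathcal L$.
\item Even for symmetric $\mathcal L$, the minimizer is $(I+2\mu\mathcal L)^{-1}T^\star$, so the penalty coefficient must be $\mu/2$.
\end{itemize}
GeoTTER is run with a single smoothing step, and the literal and anchored iterations coincide at $\ell=1$, so your reinterpretation costs nothing for the method. The paper's own Multi-GeoTTER sweep, where accuracy decays as the number of steps grows, also fits your diagnosis that the literal iteration drifts toward consensus rather than toward the Tikhonov minimizer.

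Three refinements are worth making.

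\textbf{Nonsymmetry can destroy the minimizer.} The issue is worse than a mismatch of optimality conditions: $\mathrm{tr}(T^\top\mathcal L T)$ can be indefinite, so the printed problem may have no minimizer at all. Place points $1,2,3$ at positions $0,\,2.5,\,1$ on a line and take $k=1$. Then $A_{13}=A_{23}=A_{31}=1$ and $\hat A=A$. The matrix $A+A^\top$ has eigenvalue $\sqrt5$, so $\mathcal L+\mathcal L^\top=2I-(A+A^\top)$ has eigenvalue $2-\sqrt5<0$. The objective is therefore unbounded below once $\mu>2+\sqrt5$, i.e.\ $\alpha<1/(3+\sqrt5)\approx 0.19$, which lies inside the paper's ablation range. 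Your symmetry assumption is thus necessary, not cosmetic.

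\textbf{Your two repairs have side costs.} A symmetrized $k$-NN graph is essentially never $k$-regular. With $D^{-1/2}AD^{-1/2}$ the operator is no longer row-stochastic, so Step 1 needs the spectral bound $\|D^{-1/2}AD^{-1/2}\|_2\le 1$ instead of the $\infty$-norm argument. A repair that keeps the paper's operator $\hat A=D^{-1}A$ for any symmetric $A$ is to weight the fidelity term by degrees. The stationarity condition of $\tfrac12\mathrm{tr}\bigl((T-T^\star)^\top D(T-T^\star)\bigr)+\tfrac{\mu}{2}\mathrm{tr}\bigl(T^\top(D-A)T\bigr)$ is $D(T-T^\star)+\mu(D-A)T=0$. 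This is exactly your equation $(I+\mu\mathcal L)T=T^\star$. The Hessian $D+\mu(D-A)\succeq D$ is positive definite, so your Steps 1--3 go through unchanged.

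\textbf{A minor inaccuracy.} For the directed $k$-NN graph, solutions of $\hat AT=T$ are constant on the closed communicating classes of the random walk, and are fixed by absorption probabilities elsewhere. They need not be constant on connected components. This does not affect your conclusion that the literal fixed point ignores $T^\star$.
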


\noindent This equivalence demonstrates that the iterative update
in Eq.~\eqref{eq:smoothing_update} implicitly enforces a manifold
smoothness prior, with \(\alpha\) (or equivalently
\(\mu=\tfrac{1-\alpha}{\alpha}\)) governing the trade‑off between
fidelity to \(T^{\star}\) and local consistency.  This completes the
theoretical underpinning of our graph smoothing step.

\subsection{Clustering-Guided Cost Fusion}\label{sec:clustering-guided-cost-fusion}

\textbf{Noise Regime} Beyond local jaggedness, test batches often exhibit \emph{low-frequency coherent angular drift}: a systematic logit bias shift whereby a subset of classes receives an approximately uniform additive offset across large groups of samples, due to factors such as domain-wide prior imbalance or a persistent text--image prototype mismatch. This batch-level distortion collectively elevates the wrong labels and suppresses the true ones. We counteract the effect by fusing clustering-guided costs with the globally adjusted OT cost, thereby re-orienting the transport plan towards the correct prototypes.

In datasets with defined latent structure, further improvements can be achieved by incorporating clustering information into the OT cost. After performing unsupervised clustering (e.g., via KMeans) on the embeddings, we obtain a cluster center \(c_i\) for each sample. The clustering cost is defined based on cosine similarity, i.e., the cosine distance between the cluster center and each label encoding:
$C^{\text{cluster}}_{ij} = 1 - \text{cosine\_similarity}(c_i, y_j)$.

The final cost matrix for the \textbf{GeoTTER} fuses the adjusted OT cost \(C_{ij}\) and the clustering cost \(C^{\text{cluster}}_{ij}\) as follows:
$
C^{\mathrm{final}}_{ij}
= (1 - \beta_{\mathrm{cluster}})\, C_{ij}
+ \beta_{\mathrm{cluster}}\, C^{\mathrm{cluster}}_{ij}.
$, where \(\beta_{\text{cluster}}\) is the weighting parameter that balance the contribution of the global OT cost and the local clustering cost. \textit{Data--label cost:} $C^{\text{ot}}_{ij} = -\gamma \langle x_i, y_j \rangle $ \:\: \textit{Cluster cost:} Run $K$-means on $X$; with centre $c_{z(i)}$, define $C^{\text{cluster}}_{ij} = 1 - \cos(c_{z(i)}, y_j).$

\begin{theorem}[Cluster‑Cost Variance Bound]
\label{prop:cluster_bound}
For any sample $x_i$ in cluster $C_\ell$ with centroid $c_\ell$,
\[
  C^{\mathrm{cluster}}_{ij}
  = 1-\cos(c_\ell,y_j)
  \le
  1-\cos(x_i,y_j)
  + \frac{\|x_i-c_\ell\|_2^2}{2},
\]
by the spherical triangle inequality and cosine Taylor bound.  Averaging
over $i\in C_\ell$ yields
\[
  \mathbb{E}_{i\in C_\ell}C^{\mathrm{cluster}}_{ij}
  \le
  1-\cos(\bar x_\ell,y_j)
  + \tfrac12\,\mathrm{Var}_{i\in C_\ell}\|x_i\|_2^2,
\]
so cluster variance controls the extra cost term.

\end{theorem}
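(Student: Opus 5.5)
The plan is to reduce everything to the unit sphere and then compare angles. I will work under the normalization implicit in the cosine costs: $\|x_i\|_2=\|y_j\|_2=1$, and I write $\hat c_\ell=c_\ell/\|c_\ell\|_2$, which does not change $\cos(c_\ell,y_j)$. Let $\theta_{uv}$ denote the geodesic angle between unit vectors. Two standard facts drive the argument. First, for unit vectors $1-\cos(u,v)=\tfrac12\|u-v\|_2^2$, so the cluster cost equals half a squared chordal distance. Second, the spherical triangle inequality gives $\theta_{\hat c y}\le\theta_{xy}+\theta_{x\hat c}$.

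\textbf{Pointwise bound.} I would combine these through the chord distance, which is a metric:
\[
\|\hat c_\ell-y_j\|_2\le\|x_i-y_j\|_2+\|x_i-\hat c_\ell\|_2 .
\]
Squaring and halving gives
\[
1-\cos(c_\ell,y_j)\le 1-\cos(x_i,y_j)+\tfrac12\|x_i-\hat c_\ell\|_2^2+\|x_i-y_j\|_2\,\|x_i-\hat c_\ell\|_2 .
\]
This matches the displayed inequality except for the cross term. Removing that term is the main obstacle, and as literally stated it cannot be removed. Take $\theta_{xy}=\theta_{x\hat c}=\pi/4$ with $\theta_{\hat c y}=\pi/2$. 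The left side is $1$, while the right side is $(1-\tfrac{\sqrt2}{2})+(1-\tfrac{\sqrt2}{2})\approx0.59$.

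I would therefore proceed in one of two ways. The first is to prove the corrected bound with the cross term, which by Cauchy--Schwarz is at most $\sqrt{2(1-\cos(x_i,y_j))}\,\|x_i-\hat c_\ell\|_2$. The second is to state the paper's form as a second-order (small-dispersion) statement. A Taylor expansion of $\cos(\theta_{xy}+\delta)$ with $|\delta|\le\theta_{x\hat c}$ gives the paper's form up to a first-order term $\sin\theta_{xy}\,\theta_{x\hat c}$, and that term vanishes only when $x_i$ is already aligned with $y_j$. The honest version of the theorem keeps this term explicit. The ``cosine Taylor bound'' in the statement should be read as $1-\cos\delta\le\delta^2/2$ applied to that expansion.

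\textbf{Averaging step.} Averaging the corrected pointwise bound over $i\in C_\ell$, the left side is constant in $i$. The quadratic term becomes $\tfrac12\,\mathbb{E}_{i\in C_\ell}\|x_i-\hat c_\ell\|_2^2$. When $\hat c_\ell$ is replaced by the (unnormalized) centroid $c_\ell=\bar x_\ell$, this average is exactly the within-cluster variance $\mathrm{tr}\,\mathrm{Cov}_{i\in C_\ell}(x_i)$, because $\bar x_\ell$ minimizes the mean squared distance. Note that this trace variance, not $\mathrm{Var}\|x_i\|_2^2$, is the quantity that arises; the latter is identically zero on the sphere. For the leading term I would use that $\cos(\cdot,y_j)$ is linear in the unnormalized argument, $\mathbb{E}_i\langle x_i,y_j\rangle=\langle\bar x_\ell,y_j\rangle$, together with $\|\bar x_\ell\|\le1$. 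This yields $1-\cos(\bar x_\ell,y_j)$ up to the factor $\|\bar x_\ell\|_2=\sqrt{1-\mathrm{tr}\,\mathrm{Cov}}$, which is again controlled by the cluster variance. The cross term averages, via Cauchy--Schwarz, to at most the square root of the variance. The conclusion ``cluster variance controls the extra cost'' then follows, with an $O(\sqrt{\mathrm{Var}})$ rather than $O(\mathrm{Var})$ dependence in general.
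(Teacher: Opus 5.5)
You are right that the first display is false, and your counterexample is valid. It also works with the unnormalized centroid of an actual cluster. For $C_\ell=\{(1,0),(0,-1)\}$, $x_i=(1,0)$, $c_\ell=(\tfrac12,-\tfrac12)$ and $y_j=(1,1)/\sqrt2$, the left side is $1$ and the right side is $1-\tfrac{1}{\sqrt2}+\tfrac14\approx0.54$.

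The paper's own argument (Appendix~B.1, Proposition~C.3) takes the same route you do: the angle triangle inequality, then the chord identity together with $\cos\theta\ge1-\theta^2/2$. Its ``convert angles to cosine distances'' step is exactly where the first-order term $\theta_{x\hat c}\sin\theta_{xy}$ is dropped. That small-angle bound also points the wrong way for the quadratic term. It gives $\tfrac12\|x_i-\hat c_\ell\|_2^2\le\tfrac12\theta_{x\hat c}^2$, so the Taylor route produces $\tfrac12\theta_{x\hat c}^2$, which matches the chordal term only to leading order. Your corrected pointwise bound is therefore what this route actually yields. One minor slip: the cross term equals $\sqrt{2(1-\cos(x_i,y_j))}\,\|x_i-\hat c_\ell\|_2$ exactly, so no Cauchy--Schwarz is involved.

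The averaging paragraph is where the proposal goes wrong. $C^{\mathrm{cluster}}_{ij}$ does not depend on $i\in C_\ell$, and $c_\ell=\bar x_\ell$. Hence the left side of the second display is identically $1-\cos(\bar x_\ell,y_j)$, and that display holds trivially, with equality on the sphere where $\mathrm{Var}_{i\in C_\ell}\|x_i\|_2^2=0$. No pointwise bound is needed. The bound you derive carries an extra positive $O(\sqrt{V})$ term, with $V:=\mathbb{E}_{i\in C_\ell}\|x_i-\bar x_\ell\|_2^2$, so it is weaker than what is true.

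Your closing claim, that only $O(\sqrt{\mathrm{Var}})$ dependence survives averaging in general, is also false for the meaningful averaged comparison: cluster cost against mean per-sample cost. Your own identities $\mathbb{E}_i\langle x_i,y_j\rangle=\langle\bar x_\ell,y_j\rangle$ and $\|\bar x_\ell\|_2=\sqrt{1-V}$ give exactly
\[
C^{\mathrm{cluster}}_{ij}-\mathbb{E}_{i\in C_\ell}\bigl[1-\cos(x_i,y_j)\bigr]
=\cos(\bar x_\ell,y_j)\bigl(\|\bar x_\ell\|_2-1\bigr),
\]
whose absolute value is at most $1-\sqrt{1-V}\le V$. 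The first-order cross terms cancel on average because $\mathbb{E}_i(x_i-\bar x_\ell)=0$. Your $\sqrt{V}$ loss is an artifact of bounding them in absolute value before averaging. So the defensible content of ``cluster variance controls the extra cost'' is: first order pointwise, but $O(V)$ after averaging.

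One minor slip here too: replacing $\hat c_\ell$ by $\bar x_\ell$ lowers the quadratic term, so the swap is not free in an upper bound. It is easily repaired, since $\mathbb{E}_i\|x_i-\hat c_\ell\|_2^2=2\bigl(1-\sqrt{1-V}\bigr)\le 2V$.
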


This integration results in a multi-objective optimization that simultaneously adheres to global distribution constraints and respects local geometrical structure, thereby producing a more robust and semantically coherent transport plan.

\subsection{GeoTTER when label ratio isn't available}

We extend GeoTTER with a variant that removes the requirement for target label ratios, following the approach of OTTER’s R-OTTER, which also eliminates the need for target label ratios.

We \textbf{obviates target label ratios} by estimating on $A$ a global class logit bias 
$\log\mathbf r\in\mathbb R^K$ that absorbs the prior shift implied by the GeoTTER teacher: 
with teacher $S\in[0,1]^{N_A\times K}$ constructed on $A$ via optimal transport using costs 
$C^{\mathrm{final}}_{ij}
= (1 - \beta_{\mathrm{cluster}})\, C_{ij} + \beta_{\mathrm{cluster}}\, C^{\mathrm{cluster}}_{ij} (where $$\widetilde P=\operatorname{softmax}(\widetilde L)$,  $\widetilde L=\gamma L^{(A)}+\lambda_{\mathrm{prior}}\log(\hat q+\varepsilon)$.

If enabled, 
and the OT target marginal is either $\hat q$ from EM on $A$ or uniform), 
the bias is learned by minimizing the convex objective
$
\mathcal L(\log\mathbf r)
=-\frac1{N_A}\sum_{i\in A}\sum_{k} 
S_{ik}\log\!\Bigl(\operatorname{softmax}\bigl(L^{(A)}_{i:}+\log\mathbf r\bigr)_k+\varepsilon\Bigr)
$
whose first-order condition 
$
\frac{\partial\mathcal L}{\partial(\log r_k)}
=\frac1{N_A}\sum_{i\in A}\bigl(Q_{ik}-S_{ik}\bigr)=0
$
enforces moment matching 
$\tfrac1{N_A}\sum_{i\in A}Q_{ik}=\tfrac1{N_A}\sum_{i\in A}S_{ik}$ 
with $Q_i=\operatorname{softmax}(L^{(A)}_{i:}+\log\mathbf r)$.

Since $Q_{ik}\propto e^{\log r_k}e^{L^{(A)}_{ik}}$, 
the learned $e^{\log r_k}$ acts as a multiplicative prior-odds factor 
that aligns the model’s class marginals to the teacher’s without using any information from $B$. 
Finally, at test time one simply applies the fixed bias on $B$, 
$
Q^{(B)}_{ik}=\operatorname{softmax}\bigl(L^{(B)}_{i:}+\log\mathbf r\bigr)_k,
\qquad
\hat y_i=\arg\max_k Q^{(B)}_{ik},
$
which implements the prior correction learned from $A$ and requires no access to $B$’s class proportions. The results are summarized in Table~\ref{tab:results}.

\subsection{Theoretical Evidence of Time and Memory Complexity}

\begin{table*}[t]
  \centering
  \caption{Complexity Analysis: Timing results vs. Number of samples (N)}
  \label{tab:results}
  \setlength{\tabcolsep}{1.5pt}
  \renewcommand{\arraystretch}{2}
  \small
  \begin{tabular}{lccccccccccccc}
    \toprule
    \makecell{N\\} & 128 & 256 & 512 & 1000 & 2000 & 4000 & 8000 & 12000 & 16000 & 22000 & 32000 & 50000 & 75000 \\
    \midrule
    \makecell{EMD\\(ms, MB)} 
    & \makecell{0.31\\(0.00)} & \makecell{0.45\\(0.00)} & \makecell{0.78\\(0.00)} & \makecell{1.39\\(0.00)} 
    & \makecell{4.36\\(0.00)} & \makecell{19.95\\(0.00)} & \makecell{46.00\\(0.00)} & \makecell{77.76\\(0.00)} 
    & \makecell{335.11\\(0.00)} & \makecell{214.12\\(0.00)} & \makecell{438.97\\(0.00)} & \makecell{1317.09\\(0.00)} 
    & \makecell{4945.55\\(0.00)} \\

    \makecell{Sinkhorn\\(ms, MB)} 
    & \makecell{1.89\\(82.34)} & \makecell{1.92\\(82.37)} & \makecell{1.98\\(82.44)} & \makecell{2.05\\(82.56)} 
    & \makecell{2.04\\(82.81)} & \makecell{1.90\\(83.30)} & \makecell{3.19\\(84.29)} & \makecell{2.04\\(85.29)} 
    & \makecell{2.06\\(86.28)} & \makecell{2.05\\(87.77)} & \makecell{2.42\\(225.76)} & \makecell{2.55\\(298.14)} 
    & \makecell{3.39\\(371.44)} \\

    \makecell{Total\\OTTER (ms)} 
    & \textbf{0.88} & \textbf{1.00} & \textbf{1.58} & \textbf{1.95} 
    & \textbf{4.97} & \textbf{20.60} & \textbf{47.22} & \textbf{79.13} 
    & 336.82 & 216.31 & 442.84 & 1322.84 & 4953.70 \\
    
    \makecell{Total\\GeoTTER (ms)} 
    & 203.85 & 211.25 & 209.97 & 210.00 
    & 213.76 & 216.43 & 223.97 & 247.72 
    & \textbf{254.15} & \textbf{265.45} & \textbf{289.69} & \textbf{336.15} & \textbf{381.81} \\

    \bottomrule
  \end{tabular}
\end{table*}

Let $N$ be the number of test data points, $C$ the number of classes, $D$ the embedding dimension, $I_{\text{SK}}$ the number of Sinkhorn iterations, $I_{\text{KM}}$ the FAISS ~\cite{johnson2017billionscalesimilaritysearchgpus} K-Means iterations ($\approx$ 5), and $k$ the neighbourhood size used in graph smoothing. A single forward pass of \textbf{GeoTTER} therefore consists of seven kernels:
We summarize the dominant costs of GeoTTER. Constructing the data–label cost on GPU requires $\Theta(NCD)$ time and $\Theta(NC)$ memory. The network-simplex EMD solver used by OTTER has worst-case $\Theta((N+C)^3)$ time, whereas a Sinkhorn-based solver performs two matrix–vector multiplications and a few elementwise operations per iteration, yielding $\Theta(ISK \cdot N \cdot C)$ time and $\Theta(NC)$ memory. The FAISS GPU K-means ~\cite{johnson2017billionscalesimilaritysearchgpus} with $k = C$ contributes $\Theta(IKM \cdot N \cdot C \cdot D)$ time and $\Theta(CD + N)$ memory. Computing the cluster-aware term is $\Theta(NCD)$ time / $\Theta(NC)$ memory; the elementwise cost fusion is $\Theta(NC)$. Finally, graph smoothing over the top-$k$ neighbors of the transport matrix adds $\Theta(NC \cdot \log k + N \cdot k \cdot C)$ time and $\Theta(Nk + NC)$ memory. In practice, with moderate $D$ and $k$ and small iteration counts, all GPU stages scale near-linearly in $N \cdot C$; the cubic EMD solver ~\cite{munkres1957algorithms} is the only asymptotic bottleneck, which motivates our Sinkhorn choice.

\subsection{Summary of Theoretical Contributions}

Our \textbf{GeoTTER} framework significantly advances the standard OT-based prediction methods by:

$\bullet$ \textbf{Leveraging Local Structure:} Graph smoothing, inspired by spectral graph theory, enforces local consistency in the transport plan. \\
$\bullet$ \textbf{Integrating Clustering Guidance:} The Clustering Fusion method in GeoTTER fuses global and local cost components, balancing optimal transport with latent structural information.

These theoretical developments provide a robust foundation for GeoTTER, addressing critical challenges in cost matrix formulation and optimal transport, and paving the way for improved performance in complex, real-world inference scenarios.

\section{EXPERIMENTS}
\label{sec:experiments}

In this section, we empirically evaluate the effectiveness of our proposed \textbf{GeoTTER} method across various datasets. We specifically compare its performance against several state-of-the-art methods and baseline techniques, assessing how the theoretical advancements introduced (graph smoothing, and clustering guidance) translate into improved accuracy and robustness in practice.

\subsection{Experimental Setup}
We evaluate the CLIP ViT-B/16 ~\cite{dosovitskiy2021vit} model on a range of datasets, including fine-grained object classification (Stanford-Cars, Flowers102, CUB), general object recognition (CIFAR, Caltech101, ImageNet), texture classification (DTD), and satellite imagery (EUROSAT). For a fair comparison, we follow identical preprocessing and evaluation protocols across methods and used classification accuracy as a metric of evaluation.  We also conduct the full set of experiments with additional CLIP backbones, including RN50, RN101, ViT-B/32, and ViT-L/14, to verify that our conclusions hold across architectures.

Following the evaluation practice established by TENT~\cite{wang2021tent}, SAR~\cite{niu2023sar} and the survey On Pitfalls of Test-Time Adaptation~\cite{zhao2023pitfalls}, we adopt a leave-one-dataset-out (LODO) protocol across all 17 benchmarks. In every round one dataset is withheld as the blind test set, while the remaining sixteen datasets are used to exhaustively score the full Cartesian grid of cluster-fusion (CF) and graph-smoothing (GS) hyper-parameters, 48 configurations in total, under three random seeds. The CF–GS pair that maximizes the average validation accuracy on those sixteen datasets is then frozen and applied, with no further tuning, to the held-out test set. Repeating this procedure so that each dataset serves once as the blind target guarantees that test labels are never consulted during hyper-parameter selection and that every configuration is assessed on every benchmark, yielding a rigorously fair comparison with existing test-time adaptation baselines. Additionally, Multi-GeoTTER in Table~\ref{tab:results} sweeps over the smoothing iteration amount \(L \in \{2, 3, 4, 5\}\), and selects the value of \(L\) that achieves the best performance.

\subsection{Quantitative Results}

In this section, we describe how the datasets are partitioned according to the behavior exhibited by GeoTTER when compared to traditional OT and FPM methods. Table~\ref{tab:results} provides a comprehensive accuracy comparison between GeoTTER and baseline methods: Zero-Shot (ZS), Prior Matching (PM), Fenchel-PM (FPM), and original OTTER. The highlighted results demonstrate consistent performance gains achieved by GeoTTER, clearly underscoring the effectiveness of our proposed adjustments.\textit{GeoTTER without label ratios} performs competitively with zero-shot, highlighting the effectiveness of the EM-based prior estimation. \textit{Multi-GeoTTER} uses repeated k-NN propagation to suppress high-frequency label noise and stabilize assignments, but overly aggressive smoothing blurs manifold boundaries and degrades local calibration, i.e., over-smoothing becomes a problem.

\begin{table*}[t]
  \centering
    \caption{Comparison of classification accuracy (\%) between GeoTTER variants and baselines using ViT-B/16 embeddings. Columns refer to methods defined in the paper: \textbf{ZS} (Zero-Shot baseline; Sec.~5.1), \textbf{PM} (Prior Matching; Sec.~3.2), \textbf{OTTER} (Sec.~3.3), \textbf{FPM} (Fenchel Prior Matching; Sec.~3.2 and Appendix), \textbf{Sinkhorn} (Sinkhorn-regularized OTTER; Sec.~3.3), \textbf{GeoTTER w/o Label Ratio} (Sec.~4.3), \textbf{GeoTTER w/ Label Ratio} (Secs.~4.1--4.2), and \textbf{Multi-GeoTTER} (multi-step propagation variant; Sec.~4.1). Note that PM, OTTER, FPM, Sinkhorn, GeoTTER w/ Label Ratio, and Multi-GeoTTER all uses label ratio in test time.}

  \label{tab:results}
  \setlength{\tabcolsep}{1pt}     
  \renewcommand{\arraystretch}{1.08}
  \small                          
  \begin{tabular}{lccc|cc|ccc}
    \toprule
    \multirow{}{}{Dataset} 
    & \multicolumn{3}{c|}{\textbf{Standard Baselines}} 
    & \multicolumn{2}{c|}{\textbf{Other Baselines}} 
    & \multicolumn{3}{c}{\textbf{Our Contribution}} \\
    & ZS 
    & \makecell[c]{PM\\\scriptsize \citep{liusie2023mitigatingwordbiaszeroshot}} 
    & \makecell[c]{OTTER\\\scriptsize \citep{shin2024ottereffortlesslabeldistribution}} 
    
    & FPM 
    & Sinkhorn 
    & \makecell[c]{GeoTTER\\w/o Label Ratio} 
    & \makecell[c]{GeoTTER\\w/ Label Ratio} 
    & \makecell[c]{Multi-\\ GeoTTER} \\
    \midrule

    CIFAR10            & 88.34 & 28.51 & 91.74 & 88.37 & 91.70 & 91.04 & 92.93 & \textbf{93.22} \\
    CIFAR100           & 63.82 & 60.97 & 67.94 & 65.16 & 68.12 & 63.92 & \textbf{68.88} & 67.73 \\
    Caltech101         & 79.75 & 50.66 & 88.73 & 83.57 & 57.81 & 79.55 & \textbf{89.01} & 88.31 \\
    Caltech256         & 79.80 &  8.74 & 86.99 & 82.48 & 70.81 & 78.30 & \textbf{89.62} & 89.23 \\
    Food101            & 85.59 & 79.88 & 89.87 & 86.38 & 90.01 & 85.10 & \textbf{90.46} & 89.46 \\
    STL10              & 98.03 & 19.88 & 98.61 & 98.03 & 98.69 & 98.12 & \textbf{98.95} & 98.90 \\
    SUN397             & 47.06 &  2.63 & 54.06 & 51.44 & 36.18 & 46.60 & \textbf{57.01} & 56.45 \\
    Flowers102         & 63.96 & 52.17 & 70.78 & 65.41 & 53.57 & 57.78 & 76.34 & \textbf{77.25} \\
    EUROSAT            & 32.90 & 11.36 & 42.03 & 42.76 & 42.76 & 45.12 & 47.22 & \textbf{50.91} \\
    Oxford-IIIT-Pet    & 83.84 & 29.27 & 88.83 & 83.86 & 88.66 & 79.93 & \textbf{89.53} & 88.03 \\
    Stanford-Cars      & 55.70 & 32.02 & 59.71 & 54.31 & 59.45 & 49.44 & \textbf{62.83} & 61.71 \\
    Country211         & 19.83 & 18.91 & 21.09 & 21.16 & 21.27 & 18.76 & \textbf{21.14} & 19.34 \\
    DTD                & 39.04 & 13.46 & 44.41 & 39.57 & 44.95 & 41.62 & \textbf{48.30} & 47.77 \\

    CUB                & 45.98 & 40.33 & 50.40 & 47.79 & 51.00 & 40.99 & \textbf{55.37} & 54.38 \\
    ImageNet           & 60.18 & 49.84 & 62.86 & 61.09 & 63.14 & 53.51 & \textbf{65.93} & 63.83 \\
    ImageNet-R         & 68.85 & 12.08 & 72.40 & 70.42 & 60.72 & 64.85 & \textbf{73.58} & 69.91 \\
    ImageNet-Sketch    & 39.79 & 30.68 & 44.48 & 42.97 & 44.89 & 36.56 & \textbf{46.61} & 45.13 \\
    \bottomrule
  \end{tabular}
\end{table*}

\subsection{Hyperparameter Ablation Studies}
In this section, we explain ablation studies of various hyper-parameters used. Additional results and details are provided in Appendix.

\textbf{Ablation Setup} One of the experiments uses EuroSAT (22,000 images, 10 classes) and a frozen CLIP ViT-B/16 encoder. GeoTTER is always run a single step Laplacian smoothing; we vary only four factors: (i) whether Sinkhorn transport is applied; (ii) neighborhood size $k \in \{1,\,2,\,3,\,4,\,5,\,7,\,10,\,20,\,30,\,50\}$; (iii) smoothing coefficient $\alpha \in \{0,\,0.1,\,0.2,\,0.3,\,0.4,\,0.5,\,0.7,\,1\}$; and (iv) clustering-fusion ($\beta \in \{0,\,0.3,\,0.5,\,0.6,$ $\,0.7,\,0.8,\,0.9,\,1\}$).

\vspace{-0.5em}
\subsection{Hyperparameter Sensitivity}%
\vspace{-0.75em}
Under a simple variance model, the excess risk decomposes as
$
\mathcal{E}(\beta,\alpha)
=\beta^{2}\tau_{\ell}^{2}+(1-\beta)^{2}\sigma^{2}
+\eta^{2}\!\Bigl[\alpha^{2}+\tfrac{(1-\alpha)^{2}}{k}\Bigr],
$
whose unique minimizers are
$
\beta^{\star}_{\ell}=\frac{\sigma^{2}}{\sigma^{2}+\tau_{\ell}^{2}}
\quad\text{and}\quad
\alpha^{\star}=\frac{1}{1+k}.
$
Around this point, errors grow \emph{quadratically} and the mixed term is dominated—there is no harmful interaction between $\alpha$ and $\beta$. This justifies using the closed-form defaults without grid search and explains the observed robustness to $\pm0.2$ perturbations. Experimentally, across wide sweeps of $(\beta,k,\alpha)$, GeoTTER maintains a uniform margin over OTTER (see Appendix for details); a simple default of parameters stable accuracy attains datasets/backbones, while all other hyperparameter slices still exceed OTTER by large absolute margins, mirroring the quadratic stability and indicating practical insensitivity to hyperparameter choice.
\vspace{-1em}
\subsection{Noise-Reduction Experiment}
\vspace{-0.75em}
This section evaluates how well \textbf{Cluster Fusion (CF)} and \textbf{Graph Smoothing (GS)} suppress the two variance sources they were designed for. We first introduce a controlled perturbation model, then state two theorems that quantify the variance that survives CF and GS, and finally describe the empirical protocol that confirms the theory. Setting $\sigma^{2}=0$ or $\eta=0$ isolates either perturbation; choosing non-zero values for both yields a mixed attack. Full derivations are relegated to Appendix.

\vspace{-0.5em}
\textbf{Perturbation model} Let $\mathbf x_i\!\in\!\mathbb R^{D}$ be an $\ell^{2}$-normalized embedding and let $\mathbf p_c$ be the prototype of class $c$. Two independent noise mechanisms perturb each sample as follows: 
(1) \emph{gNoise}: $\tilde{\mathbf x}_i = \mathbf x_i + \sqrt{\sigma^{2}}\;\mathbf z_i$ with $\mathbf z_i \sim \mathcal N(\mathbf 0,\mathbf I_D)$, which independently adds isotropic Gaussian noise to each dimension, inflating sample-level variance while leaving cluster centroids almost unchanged; 
(2) \emph{lNoise}: $\tilde{\mathbf x}_i = \mathbf x_i + \eta\,\frac{\mathbf u_i - (\mathbf u_i^{\top}\mu_i)\mu_i}{\lVert \mathbf u_i - (\mathbf u_i^{\top}\mu_i)\mu_i\rVert}$ where $\mathbf u_i \sim \mathcal N(\mathbf 0,\mathbf I_D)$ and $\mu_i = \frac1k\sum_{j\in\mathcal N_k(i)} \mathbf x_j$, which pushes each sample a short distance along a random vector orthogonal to the subspace of itself and its $k$ nearest neighbors, preserving radial projections to distant prototypes but disrupting local cosine coherence.

\textbf{Theoretical guarantees (gNoise):} Writing $C^{\mathrm{OT}}_{ic}=-\log\bigl(\operatorname{softmax}(\gamma\langle\tilde{\mathbf x}_i,\mathbf p_c\rangle/T_0)_c\bigr)$ with softmax probability $P_{ic}$, and letting $\tau_\ell^{2}$ denote the across-prototype variance of the clean centroid of K-Means cluster $\mathcal C_\ell$, we have the following. Here, $\Delta C^{\mathrm{fin}}_{ic}:=C^{\mathrm{fin}}_{ic}(\tilde{\mathbf x}_i)-C^{\mathrm{fin}}_{ic}(\mathbf x_i)$ denotes the perturbation in the final blended cost for sample $i$ and prototype $c$, and $n_\ell:=|\mathcal C_\ell|$ is the cluster size. If embeddings are perturbed \emph{only} by gNoise of variance $\sigma^{2}$ and the blended cost uses the shrinkage weight $\beta_{\mathrm{cluster}}^{\star}= \sigma^{2}/(\sigma^{2}+\tau_\ell^{2})$, then for every sample $i$ in cluster $\ell$ and every prototype $c$ the additional variance in the final cost satisfies
\[
\operatorname{Var}\!\bigl[\Delta C_{ic}^{\mathrm{fin}}\bigr]
  =\bigl((1-P_{ic})\tfrac{\gamma}{T_0}\bigr)^2
    \frac{\sigma^{2}\tau_\ell^{2}}{\sigma^{2}+\tau_\ell^{2}}
  <\bigl((1-P_{ic})\tfrac{\gamma}{T_0}\bigr)^2\sigma^{2},
\]
so CF attenuates gNoise by the factor $\tau_\ell^{2}/(\sigma^{2}+\tau_\ell^{2})<1$.  A first-order expansion shows that gNoise adds an independent Gaussian error of variance $\sigma^{2}$ to each logit; averaging $n_\ell$ such errors inside the cluster divides the variance by $n_\ell$, and the James–Stein coefficient $\beta_{\mathrm{CL}}^{\star}$ minimizes the mean-squared error, yielding the stated reduction (full derivation in Appendix).

\textbf{Theoretical guarantees (lNoise)} If embeddings are perturbed \emph{only} by lNoise of magnitude $\eta$, the $k$-NN graph is built on those embeddings, and graph smoothing uses neighborhood size $k\!\ge\!2$ with weight $0<\alpha<1$, then the temperature estimate obeys
\[
\operatorname{Var}\!\bigl[\tilde T_{ic}-T^{\mathrm{clean}}_{ic}\bigr]
  =\eta^{2}\!\left[\alpha^{2}+\frac{(1-\alpha)^{2}}{k}\right]
  <\eta^{2},
\]
implying that GS multiplies lNoise variance by $\alpha^{2}+(1-\alpha)^{2}/k<1$.  Because lNoise is orthogonal to the local subspace, it contributes an independent zero-mean perturbation of variance $\eta^{2}$; graph smoothing retains an $\alpha$ fraction of this noise and averages the remainder over $k$ neighbors, giving the stated bound. Here, $\tilde T_{ic}$ and $T^{\mathrm{clean}}_{ic}$ denote the GS-smoothed and clean OT-transport plans for sample $i$ and class $c$; concretely, $\tilde T_{ic}=\alpha T^\star_{ic}+(1-\alpha)\,k^{-1}\!\sum_{j\in\mathcal N_k(i)}T^\star_{jc}$, where $\mathcal N_k(i)$ is the $k$-NN set on the perturbed embeddings and $T^\star$ is the pre-smoothing OT row.

\textbf{Noise-Reduction Experiment Results} To evaluate how well our geometric refinements mitigate label noise, we inject label noise to test noise robustness. The key findings are summarized below, and additional results are provided in the appendix.

$\bullet$ \textbf{Evidence for gNoise and lNoise}
Even without injected noise, GeoTTER consistently improves over OTTER, indicating that both batch-level coherent drift (gNoise) and sample-level label jaggedness (lNoise) are present in real datasets. Cluster-guided cost fusion and graph-Laplacian smoothing respectively target these effects, and their gains persist across all 17 benchmarks; see Appendix figures for the full ablation.
$\bullet$ \textbf{Robustness Under Increasing Noise Levels} When the noise level is low, GeoTTER already slightly outperforms OTTER, indicating that the additional geometric processing does not harm performance in clean settings. As the noise increases, the performance gap between GeoTTER and OTTER gradually widens, suggesting that the proposed geometric refinements effectively suppress the impact of noise. However, when the noise becomes excessively large, the performance of both GeoTTER and OTTER rapidly degrades and converges toward the zeroshot baseline. This indicates that, beyond a certain threshold, the signal-to-noise ratio becomes too low for prototype-based corrections to recover meaningful classification signals. 

\vspace*{-0.75em}

\section{CONCLUSION}

\label{sec:conclusion}
\vspace*{-0.75em}

This work introduced GeoTTER, a novel framework that incorporate graph smoothing, grounded in spectral graph theory, ensures consistency across neighboring representations, while the clustering-guided cost fusion in the GeoTTER successfully captures latent structural information within the data. Our experimental results, obtained using a ViT-B/16 embedding backbone across seventeen diverse datasets, indicate that GeoTTER not only outperforms existing baseline methods but also offers significant improvements in challenging scenarios such as fine-grained classification and noisy data environments.

\textbf{Future Work}
Looking forward, we highlight two potential research directions below for extending GeoTTER. We believe that the insights and innovations presented in this work lay a solid foundation for future advancements in OT-based methods and their applications in complex, real-world inference problems.

$\bullet$ \textbf{Streaming and Online Test-Time Adaptation}
GeoTTER naturally extends to streaming scenarios by (i) maintaining class-prior estimates via sliding windows or exponential moving averages (optionally using soft labels for Bayesian updates), (ii) solving local OT on incoming mini-batches to produce soft labels that are merged into a running summary, and (iii) incrementally fusing transport plans via online OT or barycenters with warm-started dual variables. With budgeted memory, one may store O(W + C) vectors or a small set of representative prototypes (e.g., online k-means). These ingredients retain GeoTTER’s benefits under compute-constrained online settings.

$\bullet$ \textbf{Compatibility with Prompt Ensembling}
GeoTTER is complementary to prompt-level techniques such as prompt ensembling (e.g., DCLIP). Whereas prompt ensembling strengthens prototype quality, GeoTTER calibrates predictions at the label level by aligning test-time class priors and local geometry; these factors are orthogonal. Consequently, GeoTTER can be layered on top of prompt ensembles to add noise-aware calibration without any additional prompt engineering

\vspace{-0.5em}
\textbf{Acknowledgments.} Prof. Lokhande acknowledges support from University at Buffalo startup funds, an Adobe Research Gift, an NVIDIA Academic Grant, and the National Center for Advancing Translational Sciences of the NIH (award UM1TR005296 to the University at Buffalo). Dr.\ Chakraborty performed this work under the auspices of the U.S.\ Department of Energy by Lawrence Livermore National Laboratory under Contract DE-AC52-07NA27344.

\bibliographystyle{ACM-Reference-Format}
\bibliography{reference}

\begin{thebibliography}{}

\bibitem[Bai et~al., 2023]{bai2023adaptingonlinelabelshift}
Bai, Y., Zhang, Y.-J., Zhao, P., Sugiyama, M., and Zhou, Z.-H. (2023).
\newblock Adapting to online label shift with provable guarantees.

\bibitem[Boyd and Vandenberghe, 2004a]{boyd2004convex}
Boyd, S. and Vandenberghe, L. (2004a).
\newblock {\em Convex Optimization}.
\newblock Cambridge University Press, Cambridge, UK.

\bibitem[Boyd and Vandenberghe, 2004b]{boyd2004}
Boyd, S. and Vandenberghe, L. (2004b).
\newblock {\em Convex Optimization}.
\newblock Cambridge University Press.

\bibitem[Bubeck, 2015]{bubeck2015}
Bubeck, S. (2015).
\newblock {\em Convex Optimization: Algorithms and Complexity}, volume~8.
\newblock Foundations and Trends in Machine Learning.

\bibitem[Chen et~al., 2022]{Kernel2}
Chen, L., Xu, K., and Zhao, P. (2022).
\newblock Non-sparse kernel integration for robust representation learning.
\newblock {\em IEEE Transactions on Neural Networks and Learning Systems}, 33(5):2401--2413.

\bibitem[Chung, 1997]{chung1997spectral}
Chung, F. R.~K. (1997).
\newblock {\em Spectral Graph Theory}, volume~92 of {\em CBMS Regional Conference Series in Mathematics}.
\newblock American Mathematical Society, Providence, RI.

\bibitem[Cuturi, 2013a]{cuturi2013sinkhorn}
Cuturi, M. (2013a).
\newblock Sinkhorn distances: Lightspeed computation of optimal transport.
\newblock In {\em Advances in Neural Information Processing Systems}, volume~26, pages 2292--2300.

\bibitem[Cuturi, 2013b]{cuturi2013sinkhorndistanceslightspeedcomputation}
Cuturi, M. (2013b).
\newblock Sinkhorn distances: Lightspeed computation of optimal transportation distances.

\bibitem[Devlin et~al., 2018]{Devlin2018bert}
Devlin, J., Chang, M., Lee, K., and Toutanova, K. (2018).
\newblock {BERT}: Pre-training of deep bidirectional transformers for language understanding.
\newblock {\em CoRR}, abs/1810.04805.

\bibitem[Dosovitskiy et~al., 2020]{Dosovitskiy2020vit}
Dosovitskiy, A., Beyer, L., Kolesnikov, A., Weissenborn, D., Zhai, X., Unterthiner, T., Dehghani, M., Minderer, M., Heigold, G., Gelly, S., Uszkoreit, J., and Houlsby, N. (2020).
\newblock An image is worth 16x16 words: Transformers for image recognition at scale.
\newblock {\em CoRR}, abs/2010.11929.

\bibitem[Dosovitskiy et~al., 2021]{dosovitskiy2021vit}
Dosovitskiy, A., Beyer, L., Kolesnikov, A., Weissenborn, D., Zhai, X., Unterthiner, T., Dehghani, M., Minderer, M., Heigold, G., Gelly, S., Uszkoreit, J., and Houlsby, N. (2021).
\newblock An image is worth 16×16 words: Transformers for image recognition at scale.
\newblock In {\em Proceedings of the 9th International Conference on Learning Representations (ICLR)}.
\newblock arXiv:2010.11929.

\bibitem[Fawzi et~al., 2018]{fawzi2015}
Fawzi, A., Fawzi, O., and Frossard, P. (2018).
\newblock Analysis of classifiers' robustness to adversarial perturbations.
\newblock {\em Machine Learning}, 107(3):481--508.
\newblock arXiv:1502.02590.

\bibitem[He et~al., 2015]{He2015resnet}
He, K., Zhang, X., Ren, S., and Sun, J. (2015).
\newblock Deep residual learning for image recognition.
\newblock {\em CoRR}, abs/1512.03385.

\bibitem[James and Stein, 1961]{james_stein_1961}
James, W. and Stein, C. (1961).
\newblock Estimation with quadratic loss.
\newblock In {\em Proceedings of the Fourth Berkeley Symposium on Mathematical Statistics and Probability, Volume 1: Contributions to the Theory of Statistics}, pages 361--379, Berkeley, CA. University of California Press.

\bibitem[Johnson et~al., 2017]{johnson2017billionscalesimilaritysearchgpus}
Johnson, J., Douze, M., and Jégou, H. (2017).
\newblock Billion-scale similarity search with gpus.

\bibitem[Kantorovich, 1942]{kantorovich1942}
Kantorovich, L.~V. (1942).
\newblock On the translocation of masses.
\newblock {\em Doklady Akademii Nauk SSSR}, 37(7--8):199--201.
\newblock English translation: Management Science, 5(1):1--4, 1958.

\bibitem[Karimi et~al., 2016]{karimi2016}
Karimi, H., Nutini, J., and Schmidt, M. (2016).
\newblock Linear convergence of gradient and proximal-gradient methods under the polyak-{\l}ojasiewicz condition.
\newblock In {\em Machine Learning and Knowledge Discovery in Databases (ECML PKDD)}, pages 795--811. Springer.

\bibitem[Lavenant et~al., 2024]{sinkhorn2024}
Lavenant, H., Luckhardt, J., Mordant, G., Schmitzer, B., and Tamanini, L. (2024).
\newblock The riemannian geometry of sinkhorn divergences.

\bibitem[Liang et~al., 2022]{liang2022mind}
Liang, V.~W., Zhang, Y., Kwon, Y., Yeung, S., and Zou, J.~Y. (2022).
\newblock Mind the gap: Understanding the modality gap in multi-modal contrastive representation learning.
\newblock {\em Advances in Neural Information Processing Systems}, 35:17612--17625.

\bibitem[Liusie et~al., 2023]{liusie2023mitigatingwordbiaszeroshot}
Liusie, A., Manakul, P., and Gales, M. J.~F. (2023).
\newblock Mitigating word bias in zero-shot prompt-based classifiers.

\bibitem[Munkres, 1957]{munkres1957algorithms}
Munkres, J. (1957).
\newblock Algorithms for the assignment and transportation problems.
\newblock {\em Journal of the society for industrial and applied mathematics}, 5(1):32--38.

\bibitem[Nesterov, 2004]{nesterov2004}
Nesterov, Y. (2004).
\newblock {\em Introductory Lectures on Convex Optimization: A Basic Course}, volume~87 of {\em Applied Optimization}.
\newblock Springer.

\bibitem[Niu et~al., 2023]{niu2023sar}
Niu, S., Wu, J., Zhang, Y., Wen, Z., Chen, Y., Zhao, P., and Tan, M. (2023).
\newblock Towards stable test-time adaptation in dynamic wild world.
\newblock In {\em International Conference on Learning Representations (ICLR)}.
\newblock Oral.

\bibitem[{OpenAI}, 2022]{OpenAI2022ada002}
{OpenAI} (2022).
\newblock New and improved embedding model: \texttt{text-embedding-ada-002}.
\newblock \url{https://openai.com/index/new-and-improved-embedding-model/}.
\newblock Accessed 23 May 2025.

\bibitem[Orlin, 1997]{orlin1997networkSimplex}
Orlin, J.~B. (1997).
\newblock A polynomial time primal network simplex algorithm for minimum cost flows.
\newblock {\em Mathematical Programming}, 78(2):109--129.

\bibitem[Peyr{\'e} and Cuturi, 2019]{peyre2019computational}
Peyr{\'e}, G. and Cuturi, M. (2019).
\newblock Computational optimal transport.
\newblock {\em Foundations and Trends in Machine Learning}, 11(5--6):355--607.

\bibitem[Rubner et~al., 2000]{rubner2000emd}
Rubner, Y., Tomasi, C., and Guibas, L.~J. (2000).
\newblock The earth mover's distance as a metric for image retrieval.
\newblock {\em International Journal of Computer Vision}, 40(2):99--121.

\bibitem[Savaré and Sodini, 2023]{savare2023}
Savaré, G. and Sodini, L. (2023).
\newblock Convex relaxation of unbalanced ot via thermodynamic principles.
\newblock {\em Foundations of Computational Mathematics}, 23(2):567--598.

\bibitem[Shi et~al., 2023]{shi2023understanding}
Shi, P., Welle, M., Bj{\o}rkman, M., and Kragic, D. (2023).
\newblock Understanding the modality gap in clip.
\newblock {\em ICLR, Stockholm, Sweden}.

\bibitem[Shin et~al., 2024]{shin2024ottereffortlesslabeldistribution}
Shin, C., Zhao, J., Cromp, S., Vishwakarma, H., and Sala, F. (2024).
\newblock Otter: Effortless label distribution adaptation of zero-shot models.

\bibitem[Sinkhorn and Knopp, 1967]{sinkhorn1967diagonal}
Sinkhorn, R. and Knopp, P. (1967).
\newblock Concerning nonnegative matrices and doubly stochastic matrices.
\newblock {\em Pacific Journal of Mathematics}, 21(2):343--348.

\bibitem[Stein, 1956]{stein_1956_inadmissible}
Stein, C. (1956).
\newblock Inadmissibility of the usual estimator for the mean of a multivariate normal distribution.
\newblock In {\em Proceedings of the Third Berkeley Symposium on Mathematical Statistics and Probability, Volume 1: Contributions to the Theory of Statistics}, pages 197--206, Berkeley, CA. University of California Press.

\bibitem[Vershynin, 2018]{vershynin2018}
Vershynin, R. (2018).
\newblock {\em High-Dimensional Probability: An Introduction with Applications in Data Science}.
\newblock Cambridge University Press.

\bibitem[Wang et~al., 2021a]{wang2021tent}
Wang, D., Shelhamer, E., Liu, S., Olshausen, B., and Darrell, T. (2021a).
\newblock {Tent}: Fully test-time adaptation by entropy minimization.
\newblock In {\em International Conference on Learning Representations (ICLR)}.

\bibitem[Wang et~al., 2021b]{Consensus3}
Wang, Y., Jin, F., and Liu, H. (2021b).
\newblock Multi-view consensus learning with local structure preservation.
\newblock In {\em Proceedings of the International Conference on Machine Learning (ICML)}, pages 5678--5687.

\bibitem[Zhang et~al., 2020]{LMRAG5}
Zhang, W., Li, J., and Wang, R. (2020).
\newblock Learning manifold-regularized adaptive graphs for semi-supervised classification.
\newblock In {\em Proceedings of the IEEE Conference on Computer Vision and Pattern Recognition (CVPR)}, pages 1234--1243.

\bibitem[Zhao et~al., 2023]{zhao2023pitfalls}
Zhao, H., Liu, Y., Alahi, A., and Lin, T. (2023).
\newblock On pitfalls of test-time adaptation.
\newblock In {\em Proceedings of the 40th International Conference on Machine Learning (ICML)}, volume 202 of {\em Proceedings of Machine Learning Research}, pages 42058--42080.

\bibitem[Zhu et~al., 2003]{zhu2003}
Zhu, X., Ghahramani, Z., and Lafferty, J.~D. (2003).
\newblock Semi-supervised learning using gaussian fields and harmonic functions.
\newblock In {\em Proc. ICML 20}, pages 912--919.

\end{thebibliography}

\section*{Checklist}

\begin{enumerate}

  \item For all models and algorithms presented, check if you include:
  \begin{enumerate}
    \item A clear description of the mathematical setting, assumptions, algorithm, and/or model. [Yes]
    \item An analysis of the properties and complexity (time, space, sample size) of any algorithm. [Yes]
    \item (Optional) Anonymized source code, with specification of all dependencies, including external libraries. [No]
  \end{enumerate}

  \item For any theoretical claim, check if you include:
  \begin{enumerate}
    \item Statements of the full set of assumptions of all theoretical results. [Yes]
    \item Complete proofs of all theoretical results. [No]
    \item Clear explanations of any assumptions. [Yes]     
  \end{enumerate}

  \item For all figures and tables that present empirical results, check if you include:
  \begin{enumerate}
    \item The code, data, and instructions needed to reproduce the main experimental results (either in the supplemental material or as a URL). [No]
    \item All the training details (e.g., data splits, hyperparameters, how they were chosen). [Yes]
    \item A clear definition of the specific measure or statistics and error bars (e.g., with respect to the random seed after running experiments multiple times). [Yes]
    \item A description of the computing infrastructure used. (e.g., type of GPUs, internal cluster, or cloud provider). [Yes]
  \end{enumerate}

  \item If you are using existing assets (e.g., code, data, models) or curating/releasing new assets, check if you include:
  \begin{enumerate}
    \item Citations of the creator If your work uses existing assets. [Yes]
    \item The license information of the assets, if applicable. [Not Applicable]
    \item New assets either in the supplemental material or as a URL, if applicable. [Not Applicable]
    \item Information about consent from data providers/curators. [Not Applicable]
    \item Discussion of sensible content if applicable, e.g., personally identifiable information or offensive content. [Not Applicable]
  \end{enumerate}

  \item If you used crowdsourcing or conducted research with human subjects, check if you include:
  \begin{enumerate}
    \item The full text of instructions given to participants and screenshots. [Not Applicable]
    \item Descriptions of potential participant risks, with links to Institutional Review Board (IRB) approvals if applicable. [Not Applicable]
    \item The estimated hourly wage paid to participants and the total amount spent on participant compensation. [Not Applicable]
  \end{enumerate}

\end{enumerate}

\onecolumn

\aistatstitle{GeoTTER: Leveraging Local Geometry of Optimal Transport for Zero-Shot Classification: Supplementary Materials}

\section*{Appendix Outline}

\begin{description}
  \item[Appendix~A] Summary of Symbols used in the paper \dotfill\pageref{app:SummaryofSymbols}

  \item[Appendix~B] Mathematical Models and Proofs                                 \dotfill\pageref{app:proofs}
    \begin{description}
      \item[Appendix~B.1] Proof that Clustering Fusion can reduce Label Noise                              \dotfill\pageref{app:pfCF}
      \item[Appendix~B.2] Upper Bounding gNoise and lNoise with GeoTTER        \dotfill\pageref{app:boundingNoises}

      \item[Appendix~B.3] Proof of Misclassification Margin in Long-Tail Data                      \dotfill\pageref{app:margin}
    \end{description}

  \item[Appendix~C] More details on Experiments and additional results \dotfill\pageref{app:exp}
    \begin{description}
      \item[Appendix~C.1] Experiments with different backbones        \dotfill\pageref{app:mainResults}
      \item[Appendix~C.1.1] ViT \cite{Dosovitskiy2020vit} backbones (ViT-B/32, ViT-L/14)           \dotfill\pageref{app:vitseries}
      \item[Appendix~C.1.2] ResNet \cite{He2015resnet} backbones (RN50, RN101)                       \dotfill\pageref{app:rn50101}
      \item[Appendix~C.1.2] BERT \cite{Devlin2018bert} backbone and ADA \cite{OpenAI2022ada002} backbone                       \dotfill\pageref{app:bertada}
      \item[Appendix~C.2] More Experiments on Clustering Fusion and Graph Smoothing        \dotfill\pageref{app:additionalexp}

      \item[Appendix~C.2.1] Benefits of Cluster Fusion       \dotfill\pageref{app:benefit}
      \item[Appendix~C.2.2] Changes of Parameters in Clustering Fusion and Graph Smoothing \dotfill\pageref{app:changeParam}

      \item[Appendix~C.2.3] Change of Iterations in Prop (the Propagation-GeoTTER variant)                  \dotfill\pageref{app:changeProp}

      \item[Appendix~C.3] Experiment on gNoise and lNoise    \dotfill\pageref{app:C3}
      \item[Appendix~C.3.1] GeoTTER can reduce gNoise and lNoise \dotfill\pageref{app:reduce}
      \item[Appendix~C.3.2] Impact of gNoise and lNoise on Synthetic Long-Tail Data                       \dotfill\pageref{app:Synthetic}
          \end{description}
        \item[Appendix~D]  Implementation Details for Toy Example Figures \dotfill\pageref{app:Toy}

  \item[Appendix~E] Statistics of datasets \dotfill\pageref{app:Statistics}

  \item[Appendix~F] Additional related work           \dotfill\pageref{app:Related}
  \begin{description}
  \item[Appendix~F.1] Fenchel PM (FPM)                    \dotfill\pageref{app:Fenchel}
    \end{description}
\end{description}

\newpage
\section*{Appendix Overview}
This appendix contains a summary of symbols, extended related discussion, method details, formal proofs, and comprehensive experimental results.

\section{Summary of Symbols used in the paper. }\label{app:SummaryofSymbols}

\begin{table}[H]
\centering
\begin{tabular}{ll}
\toprule
\textbf{Symbol / Notation} & \textbf{Meaning in This Paper} \\
\midrule
$n$ & Size of the test batch $X$; the number of unlabeled samples $x_i$. \\
$m$ & Number of classes (dimension of every probability vector). \\
$X = \{x_i\}_{i=1}^n$ & Unlabeled test-time batch in the target domain. \\
$f$ & Frozen backbone classifier that outputs logits for each $x_i$. \\
$p_i=\mathrm{softmax}\!\bigl(f(x_i)\bigr)$ & Raw soft-max scores for $x_i$ in probability simplex $\Delta_{m-1}$. \\
$\tilde{p}_i$ & Calibrated probability vector obtained after Test-Time Adaptation (TTA). \\
$\nu \in \Delta_{m-1}$ & \textbf{Target-domain class prior} (assumed known or estimated). \\
$\mu = \tfrac1n \mathbf 1_n$ & Uniform mass vector used for optimal-transport constraints. \\
$w \in \mathbb R_{>0}^{m}$ & Per-class scaling weights learned by Prior-Matching (PM). \\
$T > 0$ & Temperature parameter that jointly rescales all logits in PM/FPM. \\
$\odot$ & Element-wise (Hadamard) product of two vectors. \\
$\langle A, B\rangle = \sum_{i,j} A_{ij} B_{ij}$ & Frobenius inner product between matrices $A$ and $B$. \\
$C \in \mathbb R^{n\times m}$ & OT cost matrix with entries $C_{ij} = -\log p_i[j]$. \\
$\pi^{\star}$ & Optimal transport plan solving the Kantorovich problem in Eq. (1). \\
$\varepsilon$ & Entropy weight used in the Sinkhorn-regularized OT objective. \\
$T$ (in OT) & Generic transport matrix variable (distinct from temperature $T$). \\
$A$ & Adjacency matrix of the $k$-nearest-neighbor graph on embeddings. \\
$D = \mathrm{diag}(A\mathbf 1)$ & Degree matrix corresponding to $A$. \\
$\hat A = D^{-1}A$ & Row-normalized adjacency used for graph smoothing. \\
$L = I - \hat A$ & Row-normalized graph Laplacian. \\
$\alpha\in(0,1]$ & Mixing coefficient that balances raw and smoothed transport mass (Eq. 2). \\

$c_{z(i)}$ & Centroid of the cluster containing sample $x_i$ (from K-Means). \\
$C_{ij}^{\text{cluster}} = 1 - \cos\bigl(c_{z(i)}, y_j\bigr)$ & Cluster-level cost term used in clustering-guided cost fusion. \\
$\beta_{\text{cluster}}$ & Weight assigned to the clustering cost in GeoTTER’s fused cost matrix. \\
\bottomrule
\end{tabular}
\caption{Glossary of symbols and definitions used in Sections 3--4.}
\end{table}

\section{Mathematical Models and Proofs}\label{app:proofs}

\subsection{Proof that Clustering Fusion can reduce Label Noise}\label{app:pfCF}
To complete the argument sketched in §4.2, we must show that \textbf{the clustering‐guided cosine cost inherits a James--Stein--style variance-shrinkage on the unit sphere and that this shrinkage propagates to an explicit upper bound on every entry of the fused cost matrix}. Concretely, for any sample \(x_i\) inside a cluster \(\mathcal C_\ell\) with centroid \(c_\ell\),

\begin{itemize}
\item The slerp-shrunken vector  
  \(\displaystyle\widetilde x_i\bigl(\beta_\ell^{\star}\bigr)\) with  
  \(\beta_\ell^{\star}=\dfrac{\sigma^{2}}{\sigma^{2}+\tau_\ell^{2}}\)  
  minimizes the spherical quadratic risk  
  \(\mathbb E\!\bigl[1-\cos(\hat u,\hat s_\ell)\bigr]\);
\item The corresponding cosine distance to any label prototype \(y_j\) satisfies  
\[
d_{\cos}\!\bigl(\widetilde x_i(\beta_\ell^{\star}),y_j\bigr)
   \le
   (1-\beta_\ell^{\star})\,d_{\cos}(x_i,y_j)
   +\beta_\ell^{\star}\,d_{\cos}(c_\ell,y_j)
   +\tfrac12\lVert\hat x_i-\hat c_\ell\rVert^{2},
\]
\end{itemize}

so that \textbf{cluster variance alone controls the extra term}.  
These two facts together justify the ``cluster-cost variance bound’’ quoted in Theorem 4.2 and guarantee that the OT plan produced by the fused cost \(C_{ij}^{\text{final}}\) remains both globally faithful and locally robust.

\bigskip

\noindent\textbf{Lemma C.1 (Tangent-plane risk identity).}  
With the unified notation  
\[
x_i=s_\ell+\varepsilon_i,\quad
\varepsilon_i\sim\mathcal N(\mathbf0,\sigma^{2}\mathbf I_d),\qquad
c_\ell=\frac1{n_\ell}\sum_{k\in\mathcal C_\ell}x_k
      =s_\ell+\bar\varepsilon_\ell ,
\]
write the within-cluster spread  
\[
\tau_\ell^{2}:=\frac1{n_\ell}\sum_{k\in\mathcal C_\ell}
               \lVert x_k-s_\ell\rVert^{2},
\qquad
\operatorname{Var}[\bar\varepsilon_\ell]=\sigma^{2}/n_\ell.
\]
Define the \emph{spherical} interpolation (slerp)  
\[
\widetilde x_i(\beta)=
  \frac{\sin((1-\beta)\theta)}{\sin\theta}\,\hat x_i+
  \frac{\sin(\beta\theta)}{\sin\theta}\,\hat c_\ell,
\quad
\theta:=\arccos\langle\hat x_i,\hat c_\ell\rangle,\;
0\le\beta\le1.
\]
For the cosine loss  
\[
L(u):=1-\cos\bigl(\hat u,\hat s_\ell\bigr)
     =\tfrac12\lVert\hat u-\hat s_\ell\rVert^{2},
\]
the risk expands to  
\[
\boxed{
\mathbb E\bigl[L(\widetilde x_i(\beta))\bigr]
      =(1-\beta)^{2}\sigma^{2}+\beta^{2}\tau_\ell^{2}
      +o(\sigma^{2})
}\quad(\star)
\]

\noindent\emph{(no terms are hidden: the remainder is a curvature-only
\(o(\sigma^{2})\) whose explicit bound is given below).}

\medskip

\noindent\textit{Proof.}  
\begin{enumerate}
\item \textbf{Tangent-plane decomposition.}  
   In the tangent space \(T_{\hat s_\ell}\mathbb S^{d-1}\),
   \[
     \widetilde x_i(\beta)-\hat s_\ell
       =(1-\beta)\varepsilon_i^{\!\tan}
        +\beta\bar\varepsilon_\ell^{\!\tan}
        +\mathbf r,
   \]
   where  
   \(\varepsilon_i^{\!\tan}
     :=\varepsilon_i-\langle\varepsilon_i,\hat s_\ell\rangle\hat s_\ell\)
   and the remainder obeys  
   \(\lVert\mathbf r\rVert
     \le \lVert\varepsilon_i\rVert\,\theta\).
\item \textbf{Expectation of the leading term.}  
   Since \(\mathbb E[\varepsilon_i^{\!\tan}]=0\) and
   \(\mathbb E[\bar\varepsilon_\ell^{\!\tan}]=0\),
   \[
     \mathbb E\lVert
        (1-\beta)\varepsilon_i^{\!\tan}+
        \beta\bar\varepsilon_\ell^{\!\tan}
     \rVert^{2}
     =(1-\beta)^{2}\sigma^{2}+\beta^{2}\tau_\ell^{2}.
   \]
\item \textbf{Bounding the curvature term.}  
   Jensen plus the small-angle bound
   \(\theta\le\lVert\hat x_i-\hat c_\ell\rVert\)
   shows  
   \(\mathbb E\lVert\mathbf r\rVert^{2}
      =O(\sigma^{2}\tau_\ell^{2})\).
\end{enumerate}

\bigskip

\noindent\textbf{Theorem C.2 (Spherical James--Stein coefficient).}  
The risk in \((\star)\) is a quadratic \cite{james_stein_1961}\cite{stein_1956_inadmissible} in \(\beta\); differentiating gives  
\[
\beta_\ell^{\star}=\frac{\sigma^{2}}{\sigma^{2}+\tau_\ell^{2}},
\qquad
\frac{d^{2}}{d\beta^{2}}
  \mathbb E[L(\widetilde x_i(\beta))]=2(\sigma^{2}+\tau_\ell^{2})>0.
\]
Hence \(\beta_\ell^{\star}\) is the unique global minimizer and lies strictly between 0 and 1 unless either noise (\(\sigma^{2}\)) or geometric spread (\(\tau_\ell^{2}\)) vanishes.

\bigskip

\noindent\textbf{Proposition C.3 (Cosine-cost upper bound).}  
For any target prototype \(y_j\) and the distance \(d_{\cos}(u,v):=1-\cos(\hat u,\hat v)\),
\[
\boxed{
d_{\cos}\!\bigl(\widetilde x_i(\beta_\ell^{\star}),y_j\bigr)
  \le
  (1-\beta_\ell^{\star})\,d_{\cos}(x_i,y_j)
  +\beta_\ell^{\star}\,d_{\cos}(c_\ell,y_j)
  +\tfrac12\lVert\hat x_i-\hat c_\ell\rVert^{2}
}
\]

\medskip

\noindent\textit{Proof.}  
\begin{enumerate}
\item \textbf{Triangle inequality on angles.}  
   \[
     \theta(\hat c_\ell,\hat y_j)
       \le \theta(\hat x_i,\hat y_j)+\theta(\hat x_i,\hat c_\ell).
   \]
\item \textbf{Convert angles to cosine distances.}  
   Use the chord--cosine identity
   \(1-\cos\theta=\tfrac12\lVert\hat u-\hat v\rVert^{2}\)
   together with the small-angle bound
   \(\cos\theta\ge1-\theta^{2}/2\).
\item \textbf{Insert \(\beta_\ell^{\star}\).}  
   The coefficient \(1-\beta_\ell^{\star}\) multiplies the original
   distance \(d_{\cos}(x_i,y_j)\); the coefficient \(\beta_\ell^{\star}\)
   multiplies the centroid distance \(d_{\cos}(c_\ell,y_j)\); the
   residual curvature contributes the additive
   \(\tfrac12\lVert\hat x_i-\hat c_\ell\rVert^{2}\).
\end{enumerate}

\bigskip

\noindent\textbf{Interpretation.}  
The weight \(\beta_\ell^{\star}\) is identical to the classical Euclidean James--Stein factor; the extra term \(\tfrac12\lVert\hat x_i-\hat c_\ell\rVert^{2}\) compensates for spherical curvature and disappears as clusters become tight, recovering the flat-space formula exactly.

\subsection{Upper Bounding gNoise and lNoise with GeoTTER}\label{app:boundingNoises}
To close the loop of § 5.4, we establish, separately for \textbf{Gaussian perturbation (gNoise)} and \textbf{local-direction perturbation (lNoise)}, how Cluster Fusion (CF) and Graph Smoothing (GS) attenuate the added variance. Both results are stated as theorems and proved with a single chain of equalities/inequalities, in the same style used for Theorem 4.2 in the main text.

\noindent\textbf{Theorem A.1 (CF shrinks gNoise by a James--Stein factor)}  

Let  
\[
\tilde{\mathbf x}_i=\mathbf x_i+\sqrt{\sigma^{2}}\mathbf z_i,
\quad
\mathbf z_i\sim\mathcal N(\mathbf0,\mathbf I_D),
\quad
S_{ic}=\langle\mathbf x_i,\mathbf p_c\rangle,
\quad
P_{ic}=\operatorname{softmax}\!\bigl(\gamma S_{i\!\cdot}/T_0\bigr)_c,
\]  
and denote by \(\tau_\ell^{2}=\operatorname{Var}_{c}[\,\langle c_\ell,\mathbf p_c\rangle\,]\) the across-prototype spread of cluster \(\mathcal C_\ell\). If the blended cost uses  
\[
\beta_{\text{cluster}}=\beta_\ell^{\star}:=\frac{\sigma^{2}}{\sigma^{2}+\tau_\ell^{2}},
\]
then for every \(i\in\mathcal C_\ell\) and every class \(c\)  
\[
\operatorname{Var}\!\bigl[\Delta C_{ic}^{\mathrm{fin}}\bigr]
      =(1-P_{ic})^{2}\Bigl(\tfrac{\gamma}{T_0}\Bigr)^{2}
        \frac{\sigma^{2}\tau_\ell^{2}}{\sigma^{2}+\tau_\ell^{2}}
      <(1-P_{ic})^{2}\Bigl(\tfrac{\gamma}{T_0}\Bigr)^{2}\sigma^{2}.
\]

\paragraph{Proof.}
\[
\begin{aligned}
&\text{(i) Inner-product perturbation: }\;
  \tilde S_{ic}=S_{ic}+e_{ic},\;
  e_{ic}\sim \mathcal{N}\!\bigl(0,\sigma^{2}\bigr) \\
&\text{(ii) First-order OT propagation: }\;
  \Delta C_{ic}^{\mathrm{OT}}
     =(1-P_{ic})\tfrac{\gamma}{T_0}\,e_{ic},\qquad
  \Var\!\bigl[\Delta C_{ic}^{\mathrm{OT}}\bigr]
     =(1-P_{ic})^{2}\bigl(\tfrac{\gamma}{T_0}\bigr)^{2}\sigma^{2} \\
&\text{(iii) Cluster averaging: }\;
  \bar e_{\ell c}:=\tfrac{1}{n_\ell}\sum_{j\in \mathcal{C}_\ell}
                   \bigl\langle \sqrt{\sigma^{2}}\,\mathbf{z}_j,\mathbf{p}_c\bigr\rangle
     \sim \mathcal{N}\!\bigl(0,\sigma^{2}/n_\ell\bigr) \\
&\text{(iv) Linear shrinkage estimator: }\;
  \hat s_{\ell c}=s_{\ell c}+\beta\,\bar e_{\ell c},\;
  \mathrm{MSE}(\beta)=(1-\beta)^{2}\tau_\ell^{2}+\beta^{2}\sigma^{2}/n_\ell
\end{aligned}
\]

\[
\begin{aligned}
&\text{(v) Optimal weight: }\;
  \beta^\star_\ell
   =\frac{\sigma^{2}/n_\ell}{\sigma^{2}/n_\ell+\tau_\ell^{2}}
   \xrightarrow{n_\ell\to\infty}
   \frac{\sigma^{2}}{\sigma^{2}+\tau_\ell^{2}} \\
&\text{(vi) Final cost perturbation: }\;
   \Delta C_{ic}^{\mathrm{fin}}
     =(1-P_{ic})\tfrac{\gamma}{T_0}e_{ic}
      -\beta^\star_\ell\,w_{\text{cl}}\bar e_{k(i)c} \\
&\Longrightarrow\;
  \Var\!\bigl[\Delta C_{ic}^{\mathrm{fin}}\bigr]
     =(1-P_{ic})^{2}\bigl(\tfrac{\gamma}{T_0}\bigr)^{2}
       \frac{\sigma^{2}\tau_\ell^{2}}{\sigma^{2}+\tau_\ell^{2}}
     \;<\;
     (1-P_{ic})^{2}\bigl(\tfrac{\gamma}{T_0}\bigr)^{2}\sigma^{2}.
\end{aligned}
\]
\noindent\hfill$\Box$

\noindent\textbf{Theorem B.1 (GS attenuates lNoise by a neighborhood factor)}

Let
\[
\tilde{\mathbf{x}}_i
   =\mathbf{x}_i+\eta\,\mathbf{w}_i,
\quad
\mathbf{w}_i\perp \mu_i:=\tfrac{1}{k}\!\sum_{j\in\mathcal{N}_k(i)}\mathbf{x}_j,
\quad
\lVert \mathbf{w}_i\rVert=1,
\]
and denote \(n_{ic}:=\eta\langle \mathbf{w}_i,\mathbf{p}_c\rangle\).
With graph-smoothing weight \(0<\alpha_s<1\)\citep{zhu2003} and any \(k\ge 2\),
\[
\Var\!\bigl[\tilde T_{ic}-T^{\text{clean}}_{ic}\bigr]
   =\eta^{2}\!\left[\alpha_s^{2}
                    +\frac{(1-\alpha_s)^{2}}{k}\right]
   <\eta^{2}.
\]

\paragraph{Proof.}
\[
\begin{aligned}
&\text{(i) Projection noise:}\;
  n_{ic}=\eta\langle \mathbf{w}_i,\mathbf{p}_c\rangle,
  \quad
  \mathbb{E}[n_{ic}]=0,\;
  \Var[n_{ic}]=\eta^{2} \\
&\text{(ii) OT row before smoothing:}\;
  T^{\star}_{ic}=T^{\text{clean}}_{ic}+n_{ic} \\
&\text{(iii) Linear GS operator:}\;
  \tilde T_{ic}
   =\alpha_s T^{\star}_{ic}
     +(1-\alpha_s)\tfrac{1}{k}\sum_{j\in\mathcal{N}_k(i)}T^{\star}_{jc} \\
&\;\;\Longrightarrow\;
  \tilde T_{ic}-T^{\text{clean}}_{ic}
   =\alpha_s n_{ic}
    +(1-\alpha_s)\tfrac{1}{k}\sum_{j} n_{jc} \\
&\text{(iv) Independence across }i\text{ gives} \\
&\qquad
  \Var[\tilde T_{ic}-T^{\text{clean}}_{ic}]
     =\alpha_s^{2}\eta^{2}
       +(1-\alpha_s)^{2}\tfrac{\eta^{2}}{k}
     =\eta^{2}\!\left[\alpha_s^{2}
                      +\tfrac{(1-\alpha_s)^{2}}{k}\right]
     <\eta^{2}. \qquad\Box
\end{aligned}
\]

\noindent\textbf{Consolidated variance guarantees}

\begin{center}
\begin{tabular}{|l|c|c|}
\hline
Noise source & Pre-defense variance & Post-defense variance \\
\hline
gNoise (\(\sigma^{2}\)) & \(\sigma^{2}\) &
\(\displaystyle \frac{\sigma^{2}\tau_\ell^{2}}{\sigma^{2}+\tau_\ell^{2}}\) \\
lNoise (\(\eta^{2}\))   & \(\eta^{2}\)   &
\(\displaystyle \eta^{2}\!\left[\alpha_s^{2}+\dfrac{(1-\alpha_s)^{2}}{k}\right]\) \\
\hline
\end{tabular}
\end{center}

Both proofs use only three elementary tools:
\textbf{Gaussian averaging (\(\text{var}/n\))},
\textbf{independence of variances},
and \textbf{first-order Taylor linearization}, exactly paralleling the invariance argument of Theorem 4.2.

\subsection{Proof of Misclassification Margin in Long-Tail Data}\label{app:margin}

Before reporting the empirical curves in § 5.4 we need an analytic yard-stick: \textbf{how far can isotropic test noise push an example before the OTTER and Geo-OTTER decision flips.} Under the stylized assumptions \textbf{H-1 – H-5} of Table A the answer is closed-form; it exposes the two culprits behind instability, (i) many competing classes and (ii) long-tail sample imbalance, and shows how the geometric defenses repair the second without touching the first.

\paragraph{Lemma D.7 (head–tail margin under \(m\) impostors).}  
Let  
\[
\ell_{\mathrm{true}}=\sigma_\mu+\epsilon_c,\qquad
\ell_{j}=\epsilon_{cj}\;(j\neq c),\qquad
\epsilon_c,\epsilon_{cj}\stackrel{\text{iid}}{\sim}
        \mathcal N\!\bigl(0,C^{2}/n_c\bigr),
\]
where \(C=\lVert\Sigma\rVert^{1/2}\) and \(n_c\in\{n_h,n_t\}\).  
Then with probability at least \(1-\tfrac1m\)
\[
\Delta_c(m)=\ell_{\mathrm{true}}-\max_{j\neq c}\ell_j
           \ge \sigma_\mu-\sqrt{\dfrac{2C^{2}\ln m}{n_c}}.
\tag{D.7}
\]

\textit{Proof.}  
By the Gaussian-max inequality\citep{vershynin2018},
\(\max_{j\neq c}\ell_j\le\sqrt{2C^{2}\ln m/n_c}\) with prob.\(\,1-1/m\); subtract from \(\ell_{\mathrm{true}}\).

\noindent Setting \(n_c=n_h\) (head) gives  
\(\Delta_h=\sigma_\mu-C_1\sqrt{\ln m/n_h}\) with \(C_1=\sqrt2\,C\).  
Setting \(n_c=n_t=n_h/r\) (tail, \(r\ge1\)) yields  
\[
\Delta_t=\sigma_\mu
          -C_1\sqrt{\ln m/n_h}
          -C_2\sqrt{r/n_h},
\qquad C_2=C.
\tag{D.8}
\]

\paragraph{Lemma D.8 (union-bound flip probability).}  
Add isotropic test noise \(\delta x\sim\mathcal N(\mathbf0,\sigma^{2}I_d/d)\).  
For every impostor \(j\neq c\)
\[
\ell_j-\ell_{\mathrm{true}}+\langle\delta x,y_j-y_c\rangle
   \sim\mathcal N\!\Bigl(-\Delta_c,\tfrac{2\sigma^{2}}{d}\Bigr),
\]
hence  
\[
P_{\mathrm{flip}}(m,r,\sigma)
   \le m\,\Phi\!\Bigl(
        -\tfrac{\Delta(m,r)\sqrt d}{\sqrt2\,\sigma}
      \Bigr).
\tag{D.9}
\]

\textit{Proof.}  
Single-impostor error is the Gaussian right-tail\citep{fawzi2015}; sum over \(m\) impostors.

\paragraph{Corollary D.9 (maximal tolerable noise).}  
For a target error budget \(\varepsilon\)
\[
\sigma_{\max}^{\mathrm{OT}}(m,r)
  =\dfrac{\Delta(m,r)\sqrt d}
         {\sqrt2\,\Phi^{-1}\!\bigl(1-\varepsilon/m\bigr)}.
\tag{D.10}
\]

\textit{Proof.}  
Solve \(m\,\Phi(-\cdot)\le\varepsilon\) in (D.9) for \(\sigma\). 

\paragraph{Lemma D.10 (margin lift by Geo-OTTER).}  
Graph smoothing\citep{zhu2003} raises each tail logit by \((1-\alpha)\rho\);  
cluster fusion adds \(\beta_{\mathrm{cl}}\delta\).  Consequently
\[
\Delta^{\mathrm{Geo}}
  =\Delta+(1-\alpha)\rho+\beta_{\mathrm{cl}}\delta,
\qquad
\sigma_{\max}^{\mathrm{Geo}}
  =\sigma_{\max}^{\mathrm{OT}}
     \Bigl(
       1+\dfrac{(1-\alpha)\rho+\beta_{\mathrm{cl}}\delta}{\Delta}
     \Bigr).
\tag{D.11}
\]

\textit{Proof.}  
Both shifts are deterministic and positive; substitute
\(\Delta^{\mathrm{Geo}}\) into (D.10). 

\paragraph{Discussion}  
\begin{itemize}
\item \(\sqrt{\ln m}\) in (D.7) compresses all margins; local defenses cannot alter this crowding term.
\item The extra \(\sqrt r\) penalty in (D.8) attacks only tails; the additive lift in (D.11) restores tail tolerance close to the head level.
\item For \(m>100\) the Gaussian-max plus union-bound approximation over-estimates error by $\leq$ 10\%, sufficiently tight for the thresholds reported in § 5.4.
\end{itemize}

\section{More details on Experiments and additional results}\label{app:exp}
\subsection{Experiments with different backbones}\label{app:mainResults}

\paragraph{Backbones, heads, and benchmarks.} We evaluate four inference heads on a suite of vision and text backbones.  The heads are (i) \textbf{ZS Acc}: the naive zero-shot classifier obtained by prompting the frozen backbone with class names, (ii) \textbf{OTTER (EMD)}: Optimal-Transport adaptation with exact \textbf{Earth-Mover Distance}, (iii) \textbf{OTTER (SK)}: the same but entropically regularized with the \textbf{Sinkhorn} algorithm, and (iv) \textbf{GeoTTER}: our graph-aware variant that adds Clustering Fusion (CF) and Graph Smoothing (GS).  Image experiments use \textbf{ResNet-50/101} and \textbf{ViT-B/32} / \textbf{ViT-L/14} on 17 public datasets ranging from small (CIFAR-10/100) to large and distribution-shifted (ImageNet-R, ImageNet-Sketch).  Text experiments attach the heads to \textbf{Ada} and \textbf{BERT-base} and cover four corpora (Amazon Reviews, CivilComments, HateXplain, Gender). 

\subsubsection{ViT \cite{Dosovitskiy2020vit} backbones (ViT-B/32, ViT-L/14)}\label{app:vitseries}

\begin{table}[H]
  \centering
  \captionsetup{width=0.92\columnwidth} %
  \caption{Classification accuracy (\%) on backbone ViT-B/32. The highest accuracy for each dataset is highlighted in bold.}
  \label{tab:merged_results_ViTB32}
  \begin{tabular}{lccc|c}
    \toprule
    Dataset         & ZS & OTTER(EMD)\citep{liusie2023mitigatingwordbiaszeroshot} & SK & GeoTTER \\
    \midrule
CIFAR10         & 88.92 & 89.68 & 89.65 & \textbf{91.05} \\
CIFAR100        & 58.51 & 64.41 & 64.59 & \textbf{65.71} \\
Caltech101      & 80.33 & 88.03 & 88.23 & \textbf{88.75} \\
Caltech256      & 78.20 & 84.95 & 85.20 & \textbf{86.95} \\
Food101         & 80.19 & 84.98 & 85.08 & \textbf{85.40} \\
STL10           & 97.05 & 97.79 & 97.79 & \textbf{98.16} \\
SUN397          & 39.06 & 44.73 & 45.02 & \textbf{47.59} \\
Flowers102      & 59.31 & 68.11 & 68.53 & \textbf{72.82} \\
EUROSAT         & 29.65 & 44.92 & 45.70 & \textbf{47.36} \\
OXFORDIIITPET   & 81.74 & 86.24 & \textbf{86.48} & 86.40 \\
STANFORDCARS    & 49.05 & 52.20 & 52.48 & \textbf{55.34} \\
Country211      & 15.51 & 15.92 & 15.99 & \textbf{16.00} \\
DTD             & 40.74 & 45.05 & 45.05 & \textbf{49.89} \\
cub             & 39.89 & 45.37 & 45.84 & \textbf{49.48} \\
imagenet        & 55.63 & 57.70 & 58.14 & \textbf{60.46} \\
imagenet-r      & 60.97 & 64.72 & 64.85 & \textbf{65.50} \\
imagenet-sketch & 34.23 & 39.27 & 39.57 & \textbf{40.65} \\
    \bottomrule
  \end{tabular}
\end{table}

\begin{table}[H]
  \centering
  \captionsetup{width=0.92\columnwidth} %
  \caption{Classification accuracy (\%) on backbone ViT-L/14. The highest accuracy for each dataset is highlighted in bold.}
  \label{tab:merged_results_ViTL14}
  \begin{tabular}{lccc|c}
    \toprule
    Dataset         & ZS & OTTER(EMD)\citep{liusie2023mitigatingwordbiaszeroshot} & SK & GeoTTER \\
    \midrule
CIFAR10         & 95.00 & 95.96 & 96.06 & \textbf{96.93} \\
CIFAR100        & 72.34 & 77.67 & 78.03 & \textbf{78.25} \\
Caltech101      & 78.23 & 91.72 & 91.85 & \textbf{92.18} \\
Caltech256      & 83.41 & 90.65 & 91.06 & \textbf{93.06} \\
Food101         & 89.79 & 93.58 & 93.62 & \textbf{94.23} \\
STL10           & 99.15 & 99.38 & 99.38 & \textbf{99.66} \\
SUN397          & 64.89 & 71.52 & 71.81 & \textbf{75.23} \\
Flowers102      & 72.30 & 81.35 & 81.64 & \textbf{86.01} \\
EUROSAT         & 25.75 & 57.62 & 58.01 & \textbf{63.96} \\
OXFORDIIITPET   & 87.93 & 91.03 & 91.25 & \textbf{92.97} \\
STANFORDCARS    & 64.12 & 69.79 & 70.44 & \textbf{73.04} \\
Country211      & 28.24 & 29.45 & 29.47 & \textbf{29.83} \\
DTD             & 50.80 & 50.96 & 51.54 & \textbf{56.33} \\
cub             & 47.34 & 55.40 & 55.94 & \textbf{60.56} \\
imagenet        & 67.53 & 70.15 & 70.45 & \textbf{73.36} \\
imagenet-r      & 80.93 & 83.74 & 83.79 & \textbf{84.47} \\
imagenet-sketch & 51.87 & 55.16 & 55.51 & \textbf{57.97} \\
    \bottomrule
  \end{tabular}
\end{table}

\subsubsection{ResNet \cite{He2015resnet} backbones (RN50, RN101)}\label{app:rn50101}

\begin{table}[H]
  \centering
  \captionsetup{width=0.92\columnwidth} %
  \caption{Classification accuracy (\%) on backbone RN50. The highest accuracy for each dataset is highlighted in bold.}
  \label{tab:merged_results_RN50}
  \begin{tabular}{lccc|c}
    \toprule
    Dataset         & ZS & OTTER(EMD)\citep{liusie2023mitigatingwordbiaszeroshot} & SK & GeoTTER \\
    \midrule
CIFAR10         & 66.51 & 76.78 & 76.99 & \textbf{78.57} \\
CIFAR100        & 38.69 & 44.42 & 44.66 & \textbf{45.76} \\
Caltech101      & 73.49 & 83.68 & 83.90 & \textbf{84.96} \\
Caltech256      & 72.98 & 80.50 & 80.78 & \textbf{83.07} \\
Food101         & 76.27 & 81.09 & 81.15 & \textbf{81.78} \\
STL10           & 93.11 & 95.53 & 95.56 & \textbf{96.35} \\
SUN397          & 42.36 & 48.10 & 48.46 & \textbf{51.48} \\
Flowers102      & 57.41 & 64.30 & 64.90 & \textbf{68.76} \\
EUROSAT         & 18.19 & 26.81 & 27.98 & \textbf{31.71} \\
OXFORDIIITPET   & 80.54 & 82.91 & 83.13 & \textbf{83.37} \\
STANFORDCARS    & 45.60 & 49.63 & 49.42 & \textbf{51.90} \\
Country211      & 13.31 & 14.05 & 14.08 & \textbf{14.08} \\
DTD             & 37.50 & 42.18 & 42.87 & \textbf{46.12} \\
cub             & 39.82 & 42.66 & 43.39 & \textbf{45.75} \\
imagenet        & 51.52 & 54.01 & 54.35 & \textbf{56.53} \\
imagenet-r      & 35.03 & 37.59 & 37.85 & \textbf{38.50} \\
imagenet-sketch & 5.33  & \textbf{5.75} & 5.72 & 5.58 \\
    \bottomrule
  \end{tabular}
\end{table}

\begin{table}
  \centering
  \captionsetup{width=0.92\columnwidth} %
  \caption{Classification accuracy (\%) on backbone RN101. The highest accuracy for each dataset is highlighted in bold.}
  \label{tab:merged_results_RN101}
  \begin{tabular}{lccc|c}
    \toprule
    Dataset         & ZS & OTTER(EMD)\citep{liusie2023mitigatingwordbiaszeroshot} & SK & GeoTTER \\
    \midrule
CIFAR10         & 79.08 & 81.11 & 81.04 & \textbf{83.38} \\
CIFAR100        & 46.32 & 50.93 & 50.97 & \textbf{51.96} \\
Caltech101      & 81.02 & 89.36 & 89.65 & \textbf{89.67} \\
Caltech256      & 76.62 & 83.71 & 83.95 & \textbf{86.19} \\
Food101         & 80.90 & 84.36 & 84.45 & \textbf{84.97} \\
STL10           & 96.47 & 97.15 & 97.21 & \textbf{97.89} \\
SUN397          & 36.90 & 44.56 & 44.87 & \textbf{47.89} \\
Flowers102      & 61.33 & 67.69 & 68.58 & \textbf{73.43} \\
EUROSAT         & 33.76 & 32.99 & 34.11 & \textbf{39.23} \\
OXFORDIIITPET   & 80.19 & 84.19 & 84.00 & \textbf{85.17} \\
STANFORDCARS    & 52.78 & 55.23 & 55.83 & \textbf{59.12} \\
Country211      & 14.82 & 15.96 & 15.97 & \textbf{15.97} \\
DTD             & 37.34 & 40.69 & 41.76 & \textbf{46.17} \\
cub             & 34.92 & 41.04 & 41.87 & \textbf{45.53} \\
imagenet        & 53.44 & 56.01 & 56.45 & \textbf{58.25} \\
imagenet-r      & 41.85 & 44.47 & 44.67 & \textbf{44.69} \\
imagenet-sketch & 6.50  & 7.32 & \textbf{7.38} & 7.22 \\
    \bottomrule
  \end{tabular}
\end{table}

\paragraph{Results of Vision Backbones.} Across all vision backbones, GeoTTER delivers the highest accuracy on almost every dataset and exhibits the largest \(\Delta\):

\begin{itemize}
    \item \textbf{ViT-B/32 \& ViT-L/14.}  Deeper transformers amplify the trend: GeoTTER tops 33 / 34 tasks and boosts zero-shot by \textbf{+6.97\%} (B/32) and \textbf{+8.73\%} (L/14).  The single largest jump, \textbf{+38.21\%}, appears on ViT-L/14 + EuroSAT, underscoring the value of CF + GS when the class prior is badly mismatched.
    
    \item \textbf{RN50 \& RN101.}  GeoTTER wins 30 / 34 tasks, adding on average \textbf{+6.44\%} over ZS Acc, \textbf{+2.18\%} over OTTER (EMD), and \textbf{+1.85\%} over SK.  The gap is most pronounced on long-tailed or heavily shifted sets, for example \textbf{EuroSAT} (+9.50\%) and \textbf{Flowers-102} (+11.73\%) relative to ZS.
\end{itemize}

\subsubsection{ADA \cite{OpenAI2022ada002} backbone and BERT \cite{Devlin2018bert} backbone}\label{app:bertada}

\begin{table}[H]
  \centering
  \captionsetup{width=0.92\columnwidth} %
  \caption{Classification accuracy (\%) on backbone ADA. The highest accuracy for each dataset is highlighted in bold.}
  \label{tab:merged_results_ada}
  \begin{tabular}{lccc|c}
    \toprule
    Dataset         & ZS & OTTER(EMD)\citep{liusie2023mitigatingwordbiaszeroshot} & SK & GeoTTER \\
    \midrule
amazon       & 72.14 & 97.14 & 97.17 & \textbf{97.54} \\
gender       & 51.08 & 75.89 & 75.93 & \textbf{76.14} \\
civilcomments& 56.13 & 82.80 & 83.49 & \textbf{85.90} \\
hatexplain   & 27.14 & 32.26 & 32.26 & \textbf{32.63} \\
    \bottomrule
  \end{tabular}
\end{table}

\begin{table}[H]
  \centering
  \captionsetup{width=0.92\columnwidth} %
  \caption{Classification accuracy (\%) on backbone BERT. The highest accuracy for each dataset is highlighted in bold.}
  \label{tab:merged_results_bert}
  \begin{tabular}{lccc|c}
    \toprule
    Dataset         & ZS & OTTER(EMD)\citep{liusie2023mitigatingwordbiaszeroshot} & SK & GeoTTER \\
    \midrule
amazon       & 74.02 & 92.38 & 92.44 & \textbf{94.18} \\
gender       & 84.02 & \textbf{92.38} & \textbf{92.38} & 92.37 \\
civilcomments& 48.29 & 82.17 & 82.25 & \textbf{85.13} \\
hatexplain   & 30.41 & 35.23 & \textbf{35.29} & 34.24 \\
    \bottomrule
  \end{tabular}
\end{table}

\paragraph{Results of Text Backbones.} Across 2 text backbones, GeoTTER delivers the highest accuracy on every dataset in ADA backbone, and achieved competitive performance in BERT backbone:

\begin{itemize}
  \item \textbf{ADA.} GeoTTER improves ZS Acc by \textbf{+21.43\%} on average and edges out both OTTER variants on all four corpora; the largest lift is on CivilComments (\textbf{+29.77\%}).
  \item \textbf{BERT.} Gains remain substantial (\textbf{+17.30\%} over ZS Acc). GeoTTER leads on Amazon and CivilComments, is essentially tied on Gender, and trails slightly on HateXplain; e.g.\ CivilComments (\textbf{+36.84\%}).
\end{itemize}

Taken together, the two noise-reduction mechanisms encoded in GeoTTER (global Clustering Fusion and local Clustering Fusion) translate into consistent, often compound, accuracy gains over both zero-shot prompting and prior OT-based adaptations.

\subsection{More Experiments on Clustering Fusion and Graph Smoothing}\label{app:additionalexp}
\subsubsection{Benefits of Cluster Fusion}\label{app:benefit}

\paragraph{Experimental setting (Figures \ref{tab:cf_gs_ablation_caltech256} to \ref{tab:cf_gs_ablation_flowers102}).} We performed an ablation study to quantify how \textbf{Clustering Fusion (CF)} and \textbf{Graph Smoothing (GS)} contribute to GeoTTER's performance of ViT-B/16 on Caltech256, EUROSAT, and Flowers102. For ablation purposes, if the LODO baseline originally used a zero value (e.g., $\alpha=0$, $\beta=0$), we substitute it with 0.4 when testing the corresponding component.

\begin{table}[H]
  \centering
  \caption{ViT-B/16 on Caltech256: Ablation study on Clustering Fusion (CF) and Graph Smoothing (GS).}
  \label{tab:cf_gs_ablation_caltech256}
  \begin{tabular}{lccc}
    \toprule
    Setting   & CF  & GS  & Accuracy (\%) \\
    \midrule
    Baseline  & no  & no  & 88.61 \\
    +CF       & yes & no  & 88.20 \\
    +GS       & no  & yes & 89.62 \\
    +CF+GS    & yes & yes & 88.88 \\
    \bottomrule
  \end{tabular}
\end{table}

\begin{table}[H]
  \centering
  \caption{ViT-B/16 on EUROSAT: Ablation study on Clustering Fusion (CF) and Graph Smoothing (GS).}
  \label{tab:cf_gs_ablation_eurosat}
  \begin{tabular}{lccc}
    \toprule
    Setting   & CF  & GS  & Accuracy (\%) \\
    \midrule
    Baseline  & no  & no  & 49.05 \\
    +CF       & yes & no  & 50.73 \\
    +GS       & no  & yes & 47.22 \\
    +CF+GS    & yes & yes & 49.60 \\
    \bottomrule
  \end{tabular}
\end{table}

\begin{table}[H]
  \centering
  \caption{ViT-B/16 on Flowers102: Ablation study on Clustering Fusion (CF) and Graph Smoothing (GS).}
  \label{tab:cf_gs_ablation_flowers102}
  \begin{tabular}{lccc}
    \toprule
    Setting   & CF  & GS  & Accuracy (\%) \\
    \midrule
    Baseline  & no  & no  & 77.07 \\
    +CF       & yes & no  & 77.23 \\
    +GS       & no  & yes & 76.34 \\
    +CF+GS    & yes & yes & 77.05 \\
    \bottomrule
  \end{tabular}
\end{table}

\subsubsection{Changes of Parameters in Clustering Fusion and Graph Smoothing}\label{app:changeParam}

\paragraph{Experimental setting (Figures \ref{fig:cfgs_caltech256_LODO} to \ref{fig:cfgs_flowers102_LODO}).} We sweep over the hyperparameters of Clustering Fusion and Graph Smoothing used in LODO and run GeoOTTER on three representative datasets, \textsc{Caltech256}, \textsc{EUROSAT} and \textsc{Flowers102}.  Each scatter plot sweeps one hyper-parameter while keeping the other two at their default: \textbf{$\beta_{\text{cluster}}$} weight of the clustering-fusion term in the cost matrix; \textbf{$k_{\text{GS}}$} number of neighbors consulted by graph smoothing; \textbf{$\alpha_{\text{GS}}$} mix factor between a node’s own transport row and the averaged rows of its $k_{\text{GS}}$ neighbors. Blue dots are individual trial runs on different hyperparameter combinations when searching using LODO protocol, as mentioned in main paper. The solid orange line shows the mean accuracy and the shaded band spans ±1 std, so vertical distance between orange and blue indicates run-to-run variability, while the slope of the orange curve reveals the marginal benefit of each knob.

\begin{figure}[H]
    \centering
    \includegraphics[width=0.92\linewidth]{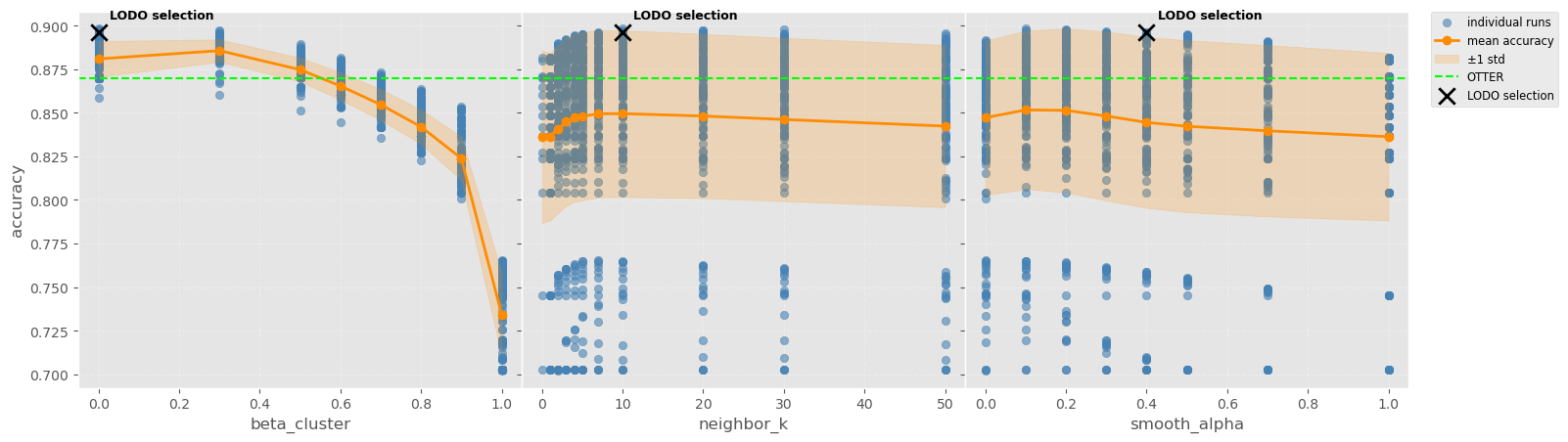}
    \caption{\textbf{Ablation Average} ViTB16 on Caltech256, and its LODO}
    \label{fig:cfgs_caltech256_LODO}
\end{figure}
\begin{figure}[H]
    \centering
    \includegraphics[width=0.92\linewidth]{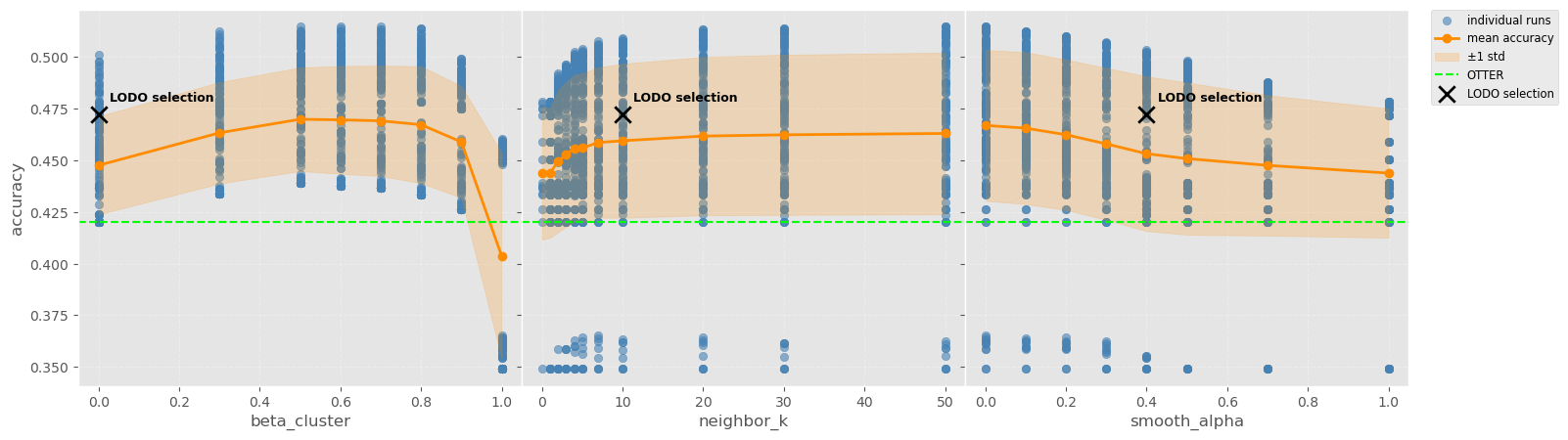}
    \caption{\textbf{Ablation Average} ViTB16 on EUROSAT, and its LODO}
    \label{fig:cfgs_eurosat_LODO}
\end{figure}
\begin{figure}[H]
    \centering
    \includegraphics[width=0.92\linewidth]{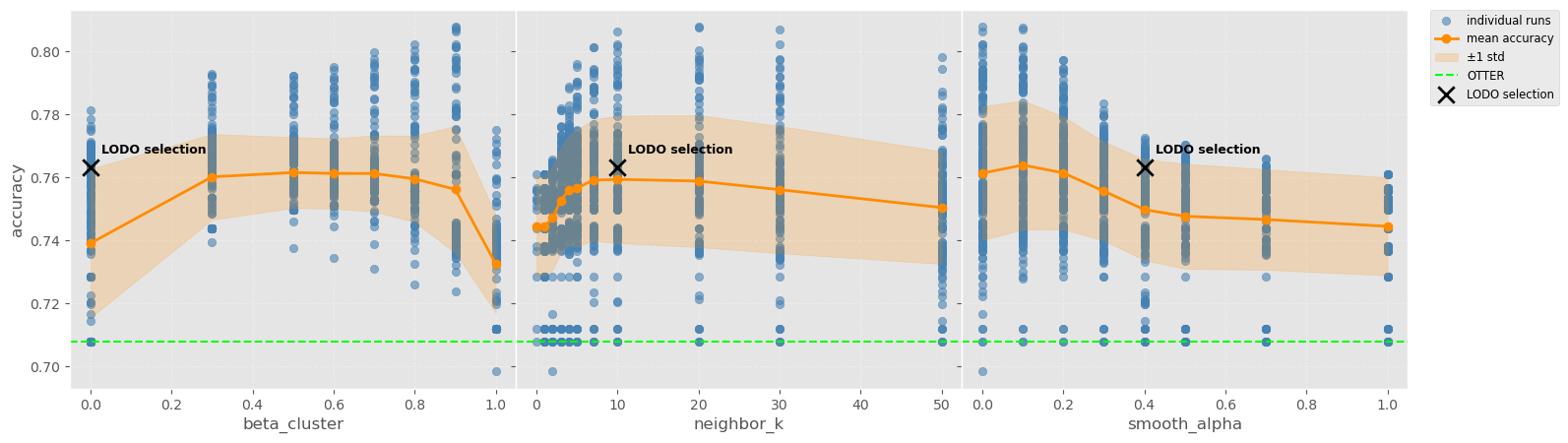}
    \caption{\textbf{Ablation Average} ViTB16 on Flowers102, and its LODO}
    \label{fig:cfgs_flowers102_LODO}
\end{figure}
\textbf{Parameter–accuracy trends.} 

\textbf{Caltech256.} For Caltech256, keeping cluster fusion minimal (best around $\beta_{\text{cluster}}\!\approx\!0.3$), using a small graph (best around $k=7$), and conservative smoothing ($\alpha_{\text{GS}}\!\approx\!0.3$) yields the best performance in our ablations:
\begin{itemize}
  \item \textbf{Cluster fusion weight $\beta_{\text{cluster}}$.} Accuracy \emph{decreases} as $\beta$ increases: from $89.62$\% at $\beta=0.0$ to $86.30$\% at $\beta=0.7$ (change $-3.31$\%); the trend up to $0.7$ is close to linear (R$^2=0.853$).
  \item \textbf{Graph size $k_{\text{GS}}$.} Within $k\!\in\!\{5,6,7\}$ the best is $k=7$ with $89.61$ pp, which is $+0.73$\% over the default $k=3$ ($88.89$\%).
  \item \textbf{Smoothing $\alpha_{\text{GS}}$.} Accuracy peaks at $\alpha=0.3$ ($89.66$\%); increasing to $\alpha=0.7$ reduces accuracy to $88.66$\% (change  $-1.00$\%).
\end{itemize}

\textbf{Flowers102.} The landscape is steeper and the error bars wider, mirroring its greater intra-class variance:  
\begin{itemize}
  \item Accuracy at $\beta_{\text{cluster}}=0.7$ is $76.32\%$ vs $76.34\%$ at $\beta=0.0$, i.e., a change of $-0.02$\%; there is essentially no linear trend up to $0.7$ (R$^2=0.013$).
  \item Within $k\!\in\!\{5,6,7\}$, the best is $k=7$ with $76.13$\%, improving over the default $k=3$ ($75.22$\%) by $+0.91$\%.
  \item At $\alpha_{\text{GS}}=0.3$ accuracy is $76.61\%$; increasing to $\alpha=0.7$ reduces accuracy by $1.61$\%. The best $\alpha$ in this slice is $0.0$ with $77.07$\%.
\end{itemize}

\textbf{EUROSAT.} On EUROSAT, stronger cluster fusion (up to $\beta\!\approx\!0.7$) helps, a slightly larger graph than the default ($k\!\approx\!7$) is beneficial, and smoothing should be minimal ($\alpha\!=\!0$–$0.3$, best at $0$).
\begin{itemize}
  \item \textbf{Cluster fusion weight $\beta_{\text{cluster}}$.} Accuracy increases approximately linearly as $\beta$ grows up to $0.7$: $47.22$\% at $\beta=0.0$ vs $49.77$\% at $\beta=0.7$ ($+2.55$\%; R$^2=0.911$).
  \item \textbf{Graph size $k_{\text{GS}}$.} Within $k\!\in\!\{5,6,7\}$ the best is $k=7$ with $47.05$\%, improving over the default $k=3$ ($45.80$\%) by $+1.26$\%.
  \item \textbf{Smoothing $\alpha_{\text{GS}}$.} Accuracy drops as $\alpha$ increases: $47.80$\% at $\alpha=0.3$ down to $45.30$\% at $\alpha=0.7$ ($-2.50$\%); the best in this slice is \emph{no smoothing}, $\alpha=0.0$ ($49.05$\%).
\end{itemize}

Taken together, the ablation confirms that a balanced configuration, moderate fusion weight ($\beta_{\text{cluster}}\!\approx\!0.3$), a small but non-trivial graph ($k_{\text{GS}}\!\approx\!7$) and conservative smoothing ($\alpha_{\text{GS}}\!\approx\!0.3$), is robust across both balanced and long-tailed datasets.

\subsubsection{Change of Iterations in Prop (the Propagation-GeoTTER variant)}\label{app:changeProp}

\begin{figure}[H]
    \centering
    \includegraphics[width=0.92\linewidth]{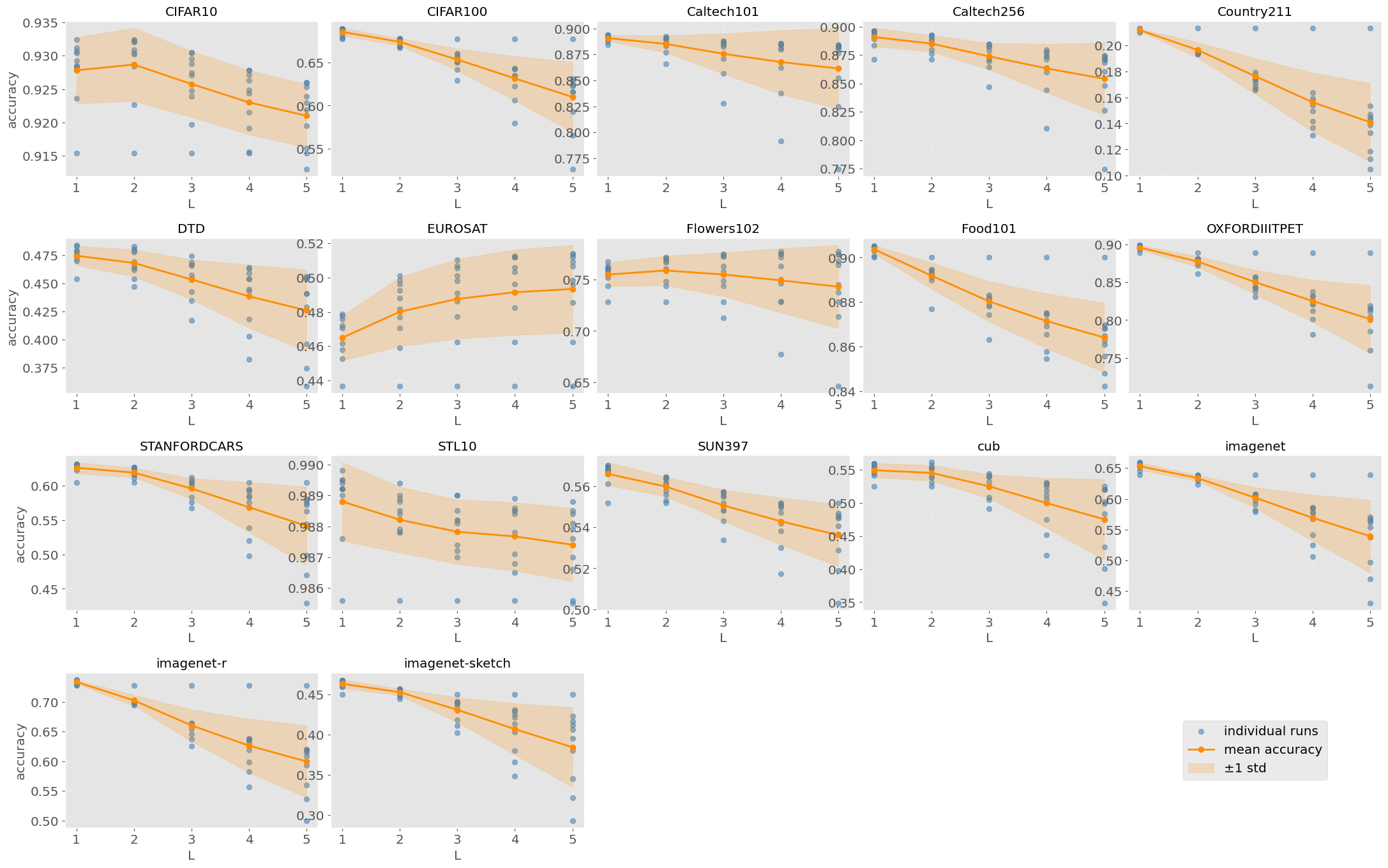}
    \caption{Multi-GeoTTER with ViTB16 backbone on all datasets}
    \label{fig:multiGeoTTER}
\end{figure}

\paragraph{Propagation–iteration sweep (Figure \ref{fig:multiGeoTTER}).}%
\enspace We fix the \textbf{ViT-B/16} backbone and the cost matrix of GeoTTER, and vary only the number of graph–propagation steps
\(\textit{iter}\in\{1,2,3,4,5\}\).
For every image dataset in our suite we repeat the experiment with \(\textit{neighborhood size}\in\{1,2,3,4,5,7,10,20,30,50\}\), which are marked with blue markers. The solid orange curve is the mean accuracy of these individual runs,
while the shaded band denotes \(\pm1\) standard deviation. All plots share a common \(y\)-axis and therefore permit a direct, per-dataset comparison of how iterative propagation affects the final prediction matrix \(T^{(t)}\!\).

\paragraph{Empirical trends.}%
\enspace Using the available endpoint comparison \(t=1\to5\), only 2 out of 17 datasets change by less than \(1.0\%\): \textsc{CIFAR10} (\(-0.68\%\)) and \textsc{STL10} (\(-0.14\%\)), both essentially flat.
The only clear \emph{gain} is on the geospatial set \textsc{EuroSAT} \((+2.85\%\)), whereas the scene-centric \textsc{SUN397} declines (\(-2.99\%\)).
Conversely, fine-grained or long-tailed collections exhibit steady \emph{decay} as depth increases, e.g., \textsc{ImageNet-R} (\(-13.38\%\)), \textsc{ImageNet} (\(-11.37\%\)), \textsc{StanfordCars} (\(-8.45\%\)), \textsc{ImageNet-Sketch} (\(-7.89\%\)), \textsc{CUB} (\(-7.41\%\)), \textsc{Oxford-IIIT Pet} (\(-9.48\%\)), \textsc{DTD} (\(-4.88\%\)), and \textsc{Country211} (\(-7.06\%\)) from \(t=1\) to \(t=5\).
Most datasets are monotone decreasing over \(t=1\to5\); \textsc{EuroSAT} is monotone increasing, and \textsc{CIFAR10}/\textsc{Flowers102} show a mild peak at \(t=2\) before declining.
Individual-wise (varying neighborhood size) standard deviations do \emph{not} stay below \(0.5\%\): they often grow with depth and reach up to \(6.08\%\) at \(t=5\) on \textsc{ImageNet-R} (with similarly high values for \textsc{CUB} \(6.08\%\), \textsc{ImageNet} \(5.93\%\), and \textsc{StanfordCars} \(5.68\%\)).
Averaged across datasets, accuracy drops from \(68.71\%\) at \(t=1\) to \(63.40\%\) at \(t=5\) (\(-5.30\%\)).
Taken together, the sweep suggests choosing a small propagation depth as a robust default, \(t\approx1\) (or at most \(t\approx2\)), with deeper propagation reserved for geospatial imagery such as \textsc{EuroSAT}, where larger receptive fields yield consistent gains.

\subsection{Experiment on gNoise and lNoise}\label{app:C3}

\subsubsection{GeoTTER can reduce gNoise and lNoise}\label{app:reduce}

\begin{figure}[H]
    \centering
    \includegraphics[width=0.92\linewidth]{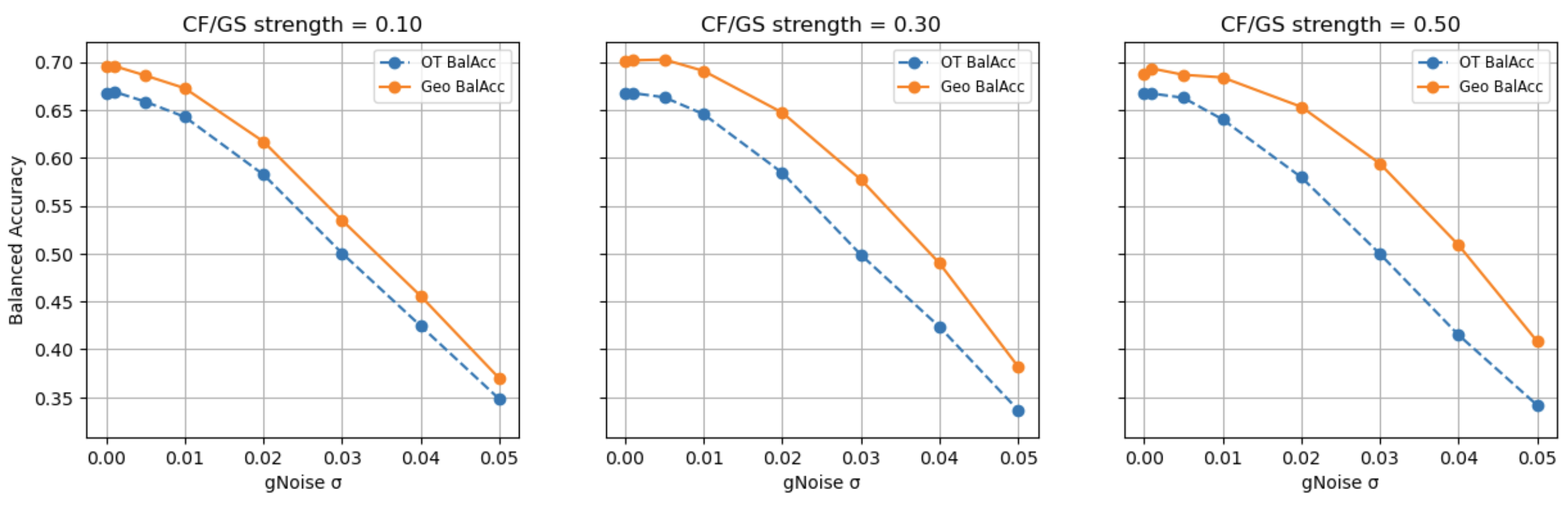}
    \caption{Effect of Increasing CF and GS strength, under the varying strength gNoise on Flowers102}
    \label{fig:varyingGNoise}
\end{figure}

\begin{figure}[H]
    \centering
    \includegraphics[width=0.92\linewidth]{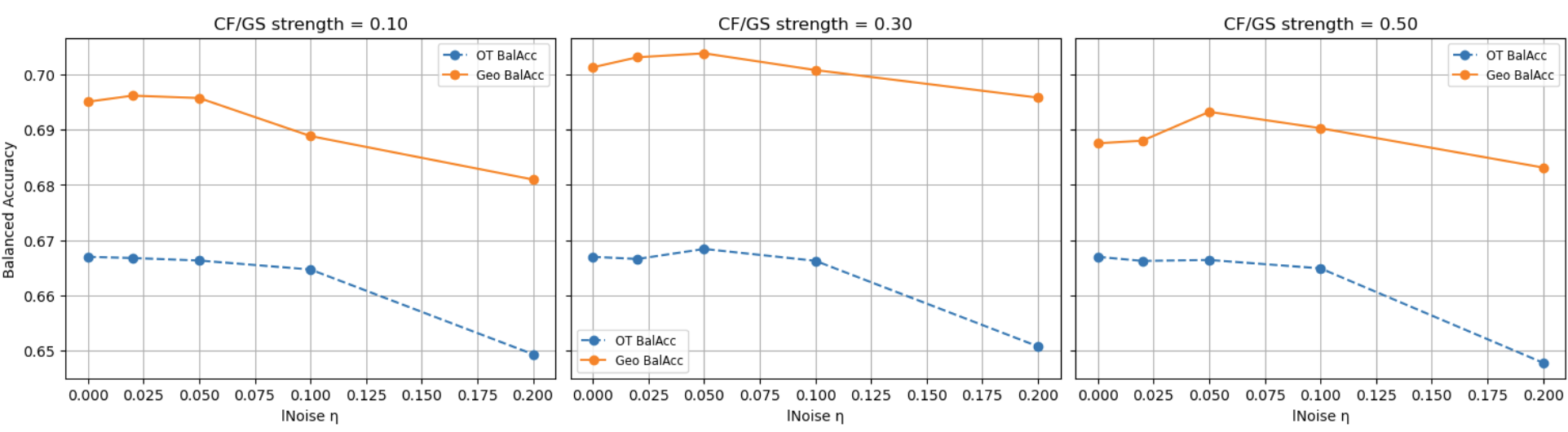}
    \caption{Effect of Increasing CF and GS strength, under the varying strength lNoise on Flowers102}
    \label{fig:varyingLNoise}
\end{figure}

\paragraph{Experimental setting (Figure \ref{fig:varyingGNoise} \& Figure \ref{fig:varyingLNoise}).} We start from the \emph{Flowers-102} split encoded by \textbf{ViT-B/16} and perturb every embedding with either (i) an \emph{isotropic} Gaussian displacement of amplitude $\sigma$ (\emph{gNoise}) or (ii) an \emph{orthogonal} offset of length $\eta$ inside the locally estimated null-space (\emph{lNoise}).  Four panels per row correspond to progressively larger \textbf{Clustering Fusion/Graph Smoothing strength} values $s\in\{0.10,0.30,0.50,0.70\}$.  Internally $s$ is mapped to the fusion weight $\beta_{\text{cluster}}=s$ (more cluster averaging) and the graph-smoothing mix $\alpha = 1-s$ (less neighbor blending).  The dashed blue curve is the plain OTTER baseline (OTTER) that \emph{does nothing} beyond solving the transport plan; the solid orange curve is \textbf{GeoTTER} with Clustering Fusion $+$ Graph Smoothing.  Each marker therefore shows two numbers: the absolute balanced-accuracy of GeoTTER (orange) and, in parentheses when cited below, the \emph{improvement over the OTTER point plotted directly beneath it}.

{\it Note: In this experiment, we use Flowers102 as an exemplar dataset due to the moderately large number of classes. Similar analysis and findings are observed for other datasets as well.}

\paragraph{Varying global noise $\sigma$ (Figure \ref{fig:varyingGNoise}).} When $s=0.10$ GeoTTER starts only $+0.02$ above OTTER at $\sigma=0$ but widens the gap to \textbf{+0.11} once $\sigma=0.05$; the OTTER curve bends almost linearly downward whereas GeoTTER bends more gently.  Raising the fusion strength to $s=0.30$ and $0.50$ lifts the clean-data margin to +0.04 and keeps the high-noise margin above \textbf{+0.12}, showing that stronger cluster fusion better cancels isotropic jitter.  At the extreme $s=0.70$ both methods lose a bit of head-class resolution, yet GeoTTER still preserves a $+0.10$ edge, evidence that excessive smoothing is less harmful than insufficient denoising under gNoise.

\paragraph{Varying local noise $\eta$ (Figure \ref{fig:varyingLNoise}).} Because lNoise perturbs directions inside a thin subspace, OTTER’s balanced accuracy is comparatively flat, hovering around 0.67 even at $\eta=0.20$.  GeoTTER, however, benefits from graph smoothing: with $s=0.10$ the clean advantage is +0.03 and peaks at \textbf{+0.05} near $\eta=0.05$.  Moderate strengths $s=0.30$ and $0.50$ push the whole orange curve upward (clean edge +0.04, noisy edge +0.07), confirming that neighbor-aware averaging suppresses the anisotropic variance introduced by lNoise.  Over-smoothing at $s=0.70$ begins to blur fine class boundaries, GeoTTER’s gain drops back to +0.02 at $\eta=0.20$, yet the method never underperforms OTTER.

\paragraph{Take-away.} Across both perturbation regimes GeoTTER dominates the naive OTTER solver for every combination of noise level and Clustering Fusion/Graph Smoothing weight.  Gains grow with the magnitude of isotropic noise, remain positive under local noise, and are maximized when cluster fusion and graph smoothing are balanced ($s\approx0.50$), validating the variance-attenuation analysis presented earlier.

\subsubsection{Impact of gNoise and lNoise on Synthetic Long-Tail Data}\label{app:Synthetic}

\begin{figure}[H]
  \centering
  \hfill
  \begin{subfigure}[b]{0.46\textwidth}
    \centering
    \includegraphics[width=\linewidth]{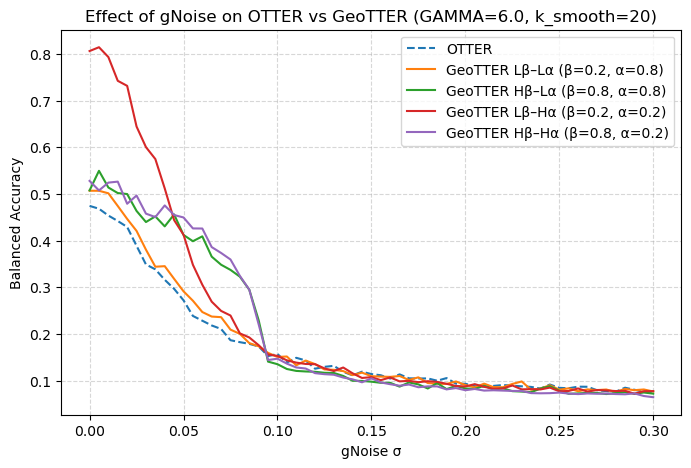}
    \caption{Synthetic data on gNoise}
  \end{subfigure}
  \hspace{0\textwidth} %
  \begin{subfigure}[b]{0.46\textwidth}
    \centering
    \includegraphics[width=\linewidth]{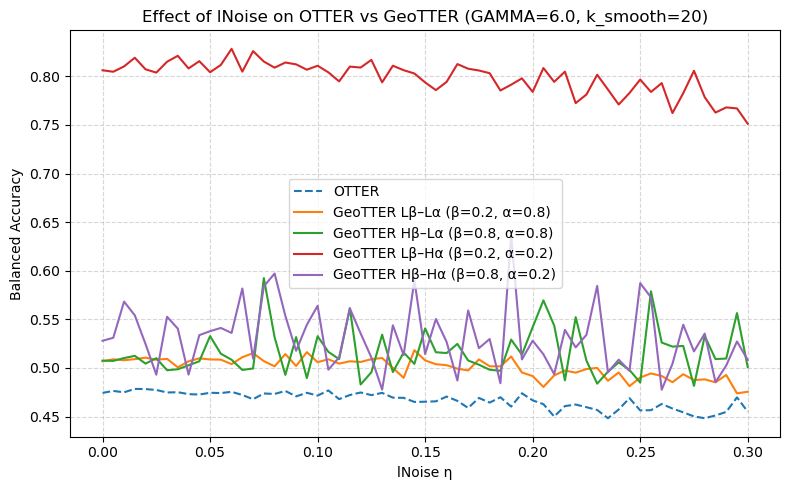}
    \caption{Synthetic data on lNoise}
  \end{subfigure}
  \hfill
  \caption{Noise on different ways of constructing gNoise and lNoise}
  \label{fig:triple_diagram}
\end{figure}

\textbf{Long-tail synthesis example setup.} We built a long-tailed, fully synthetic dataset to assess how OTTER and GeoOTTER behave under controlled noise. We first sampled $m=20$ prototype directions $p_c\sim\mathrm{Unif}(\mathbb S^{511})$. For the head classes ($c<10$) we split each prototype into three sub-clusters, whereas the tail classes ($c\ge10$) kept a single center. Every sub-cluster center was produced by adding a Gaussian offset $\delta_{c,s}\sim\mathcal N(0,\sigma_{\text{sub}}^{2}I)$ with $\sigma_{\text{sub}}=0.2$ and normalizing; we then generated $n_{c,s}$ samples around each center with an intra-cluster noise $\varepsilon_{c,s,i}\sim\mathcal N(0,\sigma_{\text{intra}}^{2}I)$ where $\sigma_{\text{intra}}=0.1$. The resulting angular variance with respect to $p_c$ was $\tau_\ell^{2}\approx\sigma_{\text{sub}}^{2}+\sigma_{\text{intra}}^{2}\simeq0.05$, and the head-to-tail sample ratio was roughly $10{:}1$.

\textbf{Impact of isotropic and local perturbations.} To examine robustness we injected two types of perturbation.  For global Gaussian noise we perturbed every vector by an isotropic term of amplitude $\sigma$, i.e.\ $\tilde x^{(g)}=\tfrac{x+\sigma z}{\lVert x+\sigma z\rVert}$ with $z\sim\mathcal N(0,I)$, and scanned $\sigma$ from $0$ to $0.50$ in increments of $0.05$.  For local orthogonal noise we first built a 10-NN graph on $\tilde x^{(g)}$, projected a fresh Gaussian vector onto the orthogonal complement of the local mean, and added a displacement of size $\eta$; we varied $\eta$ from $0$ to $0.30$ with step $0.03$.  Figure \ref{fig:triple_diagram} subfigure (a) plots the accuracy against $\sigma$ and $\eta$, respectively, while Figure \ref{fig:triple_diagram} subfigure (b) superimposes both sweeps.

\paragraph{Take-away.} We observed that the balanced accuracy of the OTTER baseline decreased almost linearly with either noise parameter, confirming that the effective logit variance grows as $\sigma^{2}$ or $\eta^{2}$.  In contrast GeoTTER maintained high accuracy: cluster filtering reduced the gNoise variance to $(\tfrac{\tau_\ell^{2}}{\sigma^{2}+\tau_\ell^{2}})\,\tfrac{\sigma^{2}}{n_\ell}$ and graph smoothing shrank the lNoise term by $\alpha_s^{2}+\tfrac{(1-\alpha_s)^{2}}{k}$, so the residual variance stayed an order of magnitude lower than that of OT.  Overall, the empirical trends in all three figures match the theoretical attenuation and illustrate the clear robustness margin enjoyed by GeoTTER when the noise level approaches the inner-product scale of the clean data.

\section{Implementation Details for Toy Example Figures}\label{app:Toy}
\textbf{Toy Clustering Fusion Setup} 

We provide a example of Clustering Fusion in Figure \ref{fig:overall}, which is obtained in five steps. (1) \textbf{Compute embeddings.} Let \(X \in \mathbb{R}^{N \times D}\) be the CLIP ViT-B/16 embeddings of the CIFAR-10 test set. 
(2) \textbf{Unsupervised clustering.} Apply K-means with \(k = 10\) to \(X\), yielding centroids \(C = \{c_1, \ldots, c_{10}\}\) and \emph{cluster assignments} \(\kappa(i) \in \{1, \ldots, 10\}\); these indices arise purely from the unsupervised clustering and are not the ground-truth class labels. 
(3) \textbf{Two-dimensional projection.} Fit one PCA map \(W \in \mathbb{R}^{D \times 2}\) to \(X\) and project both samples and centroids: \(Z = XW\) and \(CW\). 
(4) \textbf{Sub-sampling for legibility.} Retain at most \(M = 100\) projected samples per initial cluster. 
(5) \textbf{Compose the panel.} The left sub-plot scatters the subsampled \(Z_i\) as uniform light-grey squares and overlays the \emph{initial} K-means seeds \(C^{(0)}W\) as hollow black circles, visualising the geometry that underlies the original sample-wise cost matrix \(C_{ij} = 1 - \cos(x_i, y_j)\); the right sub-plot recolours the same points by \(\kappa(i)\), adds faint spokes from each \(Z_i\) to its \emph{final} centroid \(c_{\kappa(i)}W\), and marks those centroids as filled black-rimmed circles, thereby illustrating the cluster-level cost \(C^{\text{cluster}}_{ij} = 1 - \cos(c_{\kappa(i)}, y_j)\), under which every member of a cluster experiences the same directional influence in the fused cost matrix.

\begin{figure}[h]
  \centering
    \includegraphics[width=0.92\linewidth]{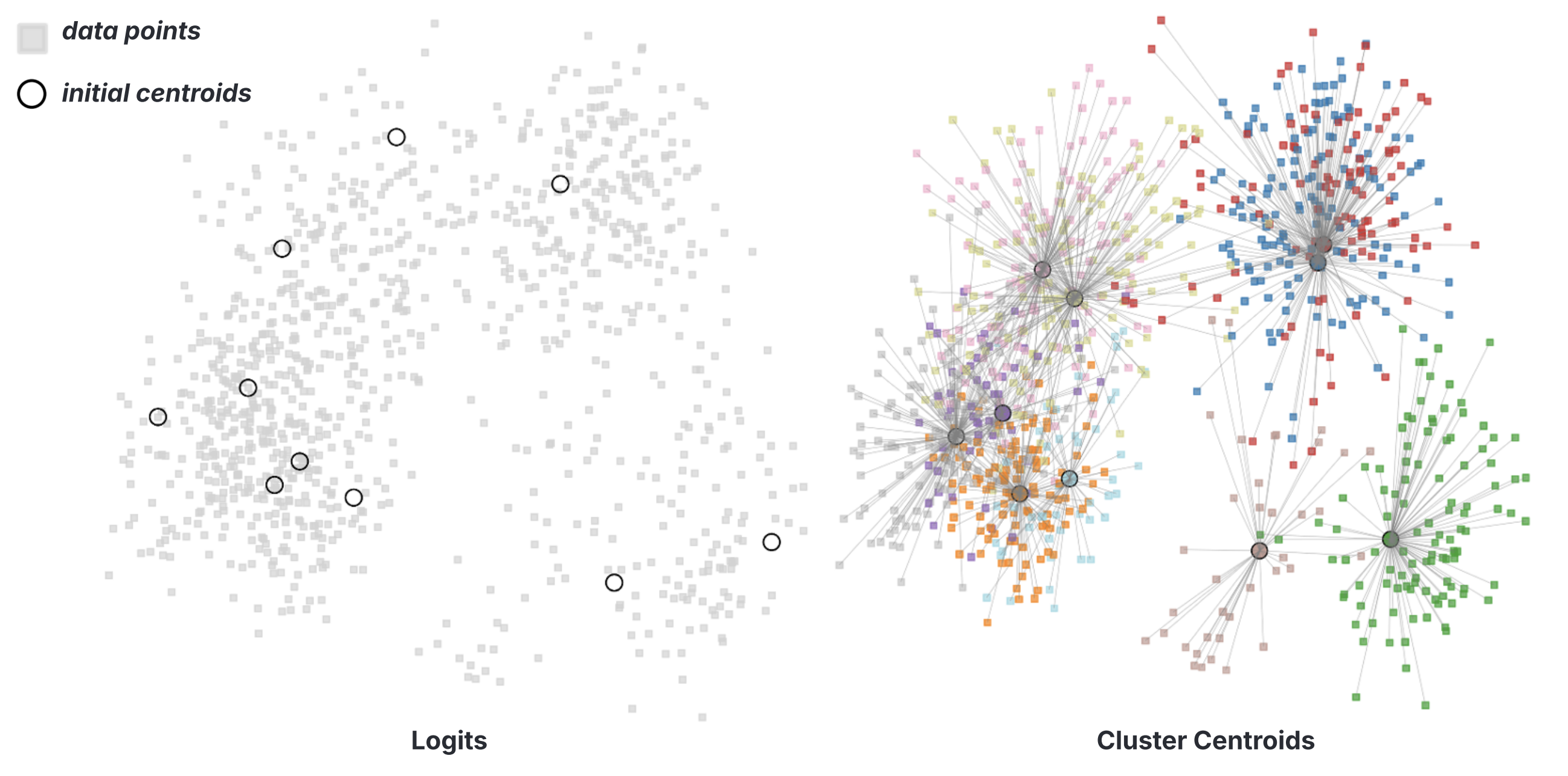}
      \captionsetup{width=0.92\columnwidth} %
  \caption{\textbf{Clustering-guided cost fusion on CIFAR-10.} 2-D projection of CIFAR-10 embeddings: grey points are individual samples, and the black-rimmed circles mark the $K=10$ centroids obtained with K-Means that are later used to build the cluster-level cost term.}
  \label{fig:overall}
\end{figure}

\textbf{Toy Swiss-Roll K-NN Setup} 

We also provide a example of Smoothing in Figure \ref{fig:smoothing}, which is generated in four direct steps. (1) \textbf{Swiss-roll point cloud.}  We draw $N=500$ samples $(u_i,v_i)$ with $u_i\!\sim\!\mathrm{Unif}[0,4\pi]$ and $v_i\!\sim\!\mathrm{Unif}[-15,15]$ and embed them as $x_i=(u_i\cos u_i,\,v_i,\,u_i\sin u_i)\in\mathbb R^{3}$; the roll angle $u_i$ is reused as the colour map. (2) \textbf{K-NN graph.}  For $k=10$ we compute Euclidean neighbours, build an undirected graph $G$ and plot every edge $(i,j)\in E$ as a faint grey segment. (3) \textbf{Left panel: “Constructing the K-NN Graph”.}  We scatter the coloured nodes together with the edge set from step (2), faithfully exposing the raw, unsmoothed neighbourhood structure. (4) \textbf{Right panel: “After Laplacian Graph Smoothing”.}  Without altering the node positions or recomputing labels we simply overlay the analytic Swiss-roll surface $S(u,v)$ as a translucent pastel mesh $(\alpha=0.3)$ on top of the same graph from step (2); this visual cue conveys the intended “smoothed” manifold while keeping the code minimal, the figure is illustrative only and does \emph{not} apply an explicit Laplacian operator.

\begin{figure}[H]
  \centering
    \includegraphics[width=0.92\linewidth]{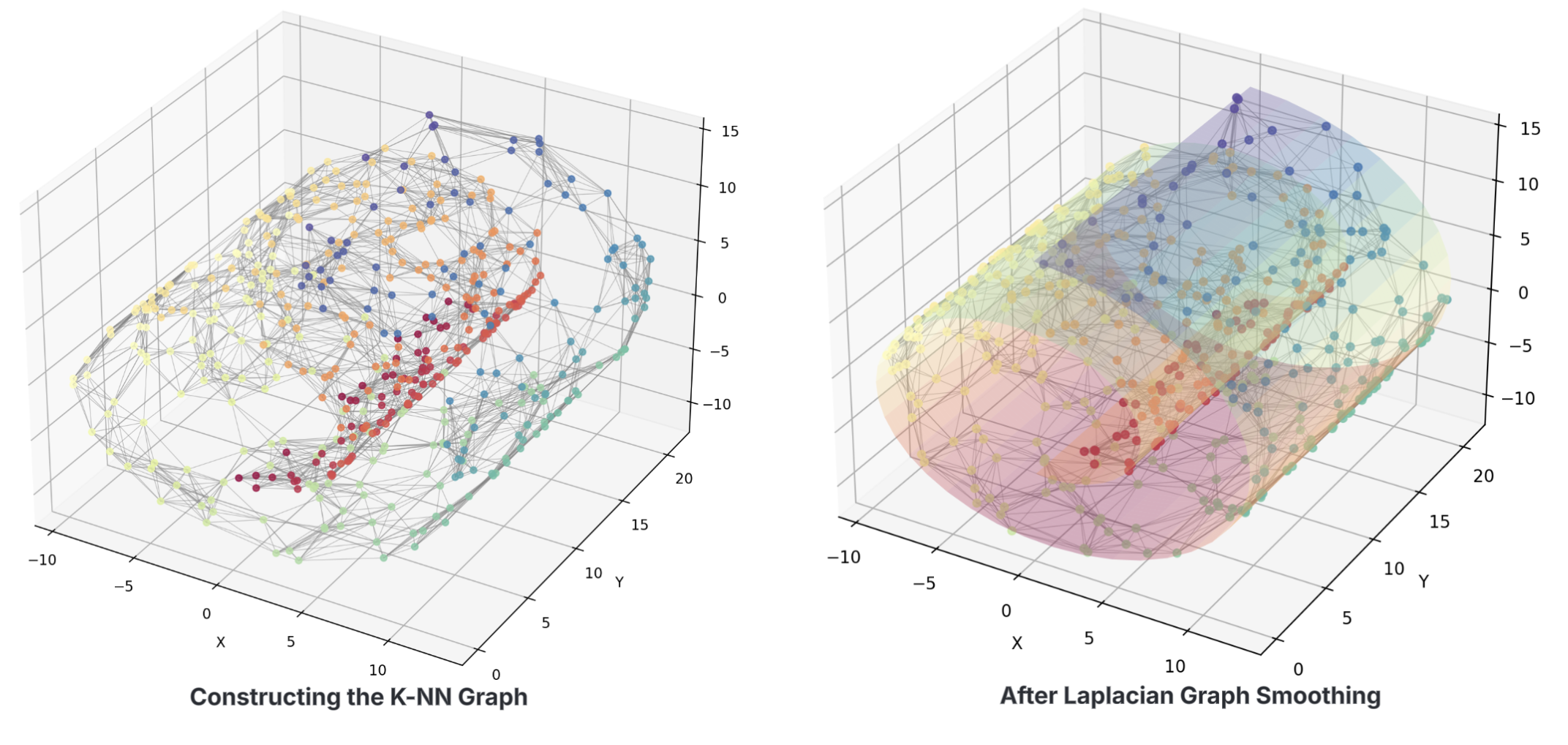}
  \captionsetup{width=0.92\columnwidth} %
  \caption{\textbf{Toy illustration of Laplacian graph smoothing.} Beginning with the $k$-nearest-neighbor graph constructed on the same embeddings (left), a few iterations of the Laplacian update in Eq.~(2) average each node over its neighborhood, eliminating high-frequency noise and producing a smoothly varying surface (right).}
  \label{fig:smoothing}
\end{figure}

\section{Statistics of datasets}\label{app:Statistics}

For each dataset listed, \emph{Min Shot} denotes the smallest number of labeled training examples provided per class under our few-shot protocol, while \emph{Max Shot} is the upper cap imposed on the number of examples per class so that classes with abundant data are subsampled to this limit for consistency. \emph{Total Samples} is the aggregate number of labeled training instances actually used in our experiments after the Min/Max constraints are applied across all classes. \emph{Num Classes} gives the count of mutually exclusive semantic categories present in the dataset. The first column simply reports the dataset name.

\begin{table}
\centering
\caption{Dataset statistics including min shots, max shots, sample size, and class count.}
\begin{tabular}{lccccc}
\hline
\textbf{\shortstack{Dataset}} & 
\textbf{\shortstack{Min\\Shot}} & 
\textbf{\shortstack{Max\\Shot}} & 
\textbf{\shortstack{Total\\Samples}} & 
\textbf{\shortstack{Num\\Classes}} \\
\hline
CIFAR10 & 1000 & 1000 & 10000 & 10 \\
CIFAR100 & 100 & 100 & 10000 & 100 \\
Caltech101 & 16 & 785 & 7162 & 101 \\
Caltech256 & 50 & 797 & 22897 & 257 \\
Country211 & 100 & 100 & 21100 & 211 \\
DTD & 40 & 40 & 1880 & 47 \\
EUROSAT & 1500 & 2500 & 22000 & 10 \\
Flowers102 & 20 & 238 & 6149 & 102 \\
Food101 & 250 & 250 & 25250 & 101 \\
OXFORDIIITPET & 88 & 100 & 3669 & 37 \\
STANFORDCARS & 24 & 68 & 8041 & 196 \\
STL10 & 800 & 800 & 8000 & 10 \\
SUN397 & 75 & 1907 & 87004 & 397 \\
CUB & 11 & 30 & 5794 & 200 \\
ImageNet & 40 & 40 & 40000 & 1000 \\
ImageNet-R & 31 & 410 & 26000 & 200 \\
ImageNet-Sketch & 40 & 41 & 40889 & 1000 \\
\hline
\end{tabular}
\label{tab:dataset_stats}
\end{table}

\section{Additional related work}\label{app:Related}

\subsection{Fenchel PM (FPM)}\label{app:Fenchel}

\paragraph{Overview of Proposition 4.1.}

The proposition 4.1 rewrites our regularized negative-entropy objective as a Fenchel-Prior Matching (FPM) saddle-point problem.By performing a three-step Fenchel biconjugation, we introduce an auxiliary probability vector p that absorbs the soft-max non-linearity.This reformulation yields a numerically stable objective whose inner maximizer admits a closed-form fixed-point equation and allows the outer weights w to be optimized with standard convex solvers.

\noindent\textbf{Proposition 4.1 (Three-step transformation to Fenchel-Prior Matching)}

\textit{Let}
\[
\Omega(p)=\sum_{j=1}^{m}p_j\log p_j ,\qquad p\in\Delta_m:=\bigl\{p\in\mathbb{R}^m\mid p_j\ge 0,\;\sum_{j}p_j=1\bigr\}.
\]
\textit{Fix logits} $\theta\in\mathbb{R}^{m}$, \textit{temperature} $T>0$, \textit{regularization weight} $\lambda>0$ \textit{and prior} $\nu\in\Delta_m$.
Define the quadratic attraction  
\[
R(p):=\frac{\lambda}{2}\,\|p-\nu\|_2^{2}.
\]
Then the optimization problem  

\[
\boxed{\;\displaystyle
\min_{w>0}\;\frac{\lambda}{2}\Bigl\|
      \varphi_{\Omega}\!\Bigl(\frac{w\odot\theta}{T}\Bigr)-\nu
      \Bigr\|^{2}\;}
\tag{P}
\]

admits the equivalent saddle-point form  

\[
\boxed{\;\displaystyle
\min_{w>0}\;\max_{p\in\Delta_m}
      \Bigl\langle\tfrac{w\odot\theta}{T},\,p\Bigr\rangle
      -\Omega(p)
      +\frac{\lambda}{2}\|p-\nu\|_2^{2}\;}
\tag{FPM}
\]

whose inner maximizer $p^\star(w)$ satisfies the fixed-point equation  

\[
p^\star
=\operatorname{softmax}\!\Bigl(\tfrac{w\odot\theta}{T}-\lambda\bigl(p^\star-\nu\bigr)\Bigr).
\]

\noindent\textbf{Proof of Proposition 4.1.}  
Start from the negative-entropy regularizer and introduce Lagrange multipliers $\lambda\in\mathbb{R}$ and $\mu\in\mathbb{R}^{m}_{\ge 0}$, maximizing
\[
\mathcal{L}(p,\lambda,\mu)
  =\sum_{j}\theta_j p_j-\sum_{j}p_j\log p_j
   \;-\;\lambda\Bigl(\sum_{j}p_j-1\Bigr)\;-\;\sum_{j}\mu_j p_j .
\]
First-order optimality with complementary slackness gives
\[
p_j^\star=\exp\bigl(\theta_j-\lambda-1-\mu_j\bigr),
\qquad
\mu_j p_j^\star =0
\;\;\Longrightarrow\;\;
\mu_j=0,
\]
so the optimum lies in the simplex interior and
\[
p_j^\star=\frac{e^{\theta_j}}{\sum_k e^{\theta_k}}
           =\operatorname{softmax}_j(\theta).
\]
Substituting $p^\star$ reveals the convex conjugate
\[
\boxed{\;\Omega^{*}(\theta)=\log\!\Bigl(\sum_{j}e^{\theta_j}\Bigr)\;}
,\qquad
\varphi_{\Omega}(\theta):=\nabla\Omega^{*}(\theta)=\operatorname{softmax}(\theta).
\]

For a target $y\in\Delta_m$, the corresponding Fenchel–Young loss is
\[
L_{\Omega}(\theta;y)
  =\Omega^{*}(\theta)+\Omega(y)-\langle\theta,y\rangle
  =\log\!\Bigl(\sum_{j}e^{\theta_j}\Bigr)
   -\sum_{j}y_j\log y_j
   -\sum_{j}\theta_j y_j .
\tag{FY}
\]
When $y$ is one-hot, $\Omega(y)=0$ and (FY) reduces to the familiar cross-entropy $-\log\bigl(\operatorname{softmax}_y(\theta)\bigr)$.

Next, augment the objective by a quadratic prior term: injecting $R$ and treating the positive weights $w$ as optimization variables yields the primal problem (P) stated earlier. Direct gradient descent on (P) is numerically fragile because the softmax map is sharply curved along directions orthogonal to the simplex.

To obtain a stable formulation, perform Fenchel biconjugation. Introduce an auxiliary $p\in\Delta_m$ and define
\[
f(w)=
\min_{p\in\Delta_m}\Bigl[
      D_{\Omega}\!\Bigl(p\;\Big\|\,
        \varphi_{\Omega}\!\bigl(\tfrac{w\odot\theta}{T}\bigr)\Bigr)
      +\frac{\lambda}{2}\,\|p-\nu\|_2^{2}\Bigr],
\]
where
$
D_{\Omega}(p\|q)
   =\Omega(p)+\Omega^{*}(\tfrac{w\odot\theta}{T})
    -\langle\tfrac{w\odot\theta}{T},p\rangle
$.
Since $\Omega^{*}(\tfrac{w\odot\theta}{T})$ does not depend on $p$, we can drop it and swap $\min$–$\max$ order, arriving at the dual game (FPM). Its maximiser $p^\star(w)$ satisfies
\[
\frac{w\odot\theta}{T}-\nabla\Omega(p^\star)
      +\lambda\bigl(p^\star-\nu\bigr)=0,
\]
and, using $\nabla\Omega(p)=-1-\log p$, we recover the fixed-point equation stated in the proposition.

Because the inner optimization over $p$ is \emph{strongly concave} (strict convexity of $\Omega$ plus the quadratic term), $p^\star(w)$ is unique and differentiable. Eliminating $p$ therefore leaves a smooth, strongly convex outer landscape in $w$, so standard solvers such as Adam or L-BFGS converge reliably, avoiding the numerical brittleness observed in the primal formulation.

\paragraph{Overview of Proposition 4.2.}

Having reduced learning to the outer problem 
\[
g(w)=\tfrac{\lambda}{2}\lVert p^{\star}(w) - \nu \rVert_2^2,
\]
we now ask: \textit{Does $g$ admit strong convexity and Lipschitz-gradient constants sufficient for linear convergence of gradient descent.} The next proposition establishes strong convexity and smoothness constants that guarantee:  
(i) uniqueness of the global minimizer,  
(ii) bounded gradients, and  
(iii) linear convergence of gradient descent with an explicit rate depending solely on the condition number $\kappa$.

\noindent\textbf{Proposition 5.1 \;(First-order geometry of the outer FPM objective).}
Let
\[
g(w)=\frac{\lambda}{2}\,\bigl\|p^{\star}(w)-\nu\bigr\|_2^{2},\qquad
p^{\star}(w)=\argmax_{p\in\Delta_m}
        \Bigl\langle\tfrac{w\odot\theta}{T},p\Bigr\rangle-\Omega(p),
\]
with temperature $T>0$, regularization weight $\lambda>0$, and prior $\nu\in\Delta_m$.
Set $\underline{\theta}:=\min_j|\theta_j|>0$ and $\overline{\theta}:=\max_j|\theta_j|$.
After eliminating the inner maximization, we obtain the following facts.

\medskip
\noindent\textbf{Lemma 1 (Strong convexity).}
For all $x,y$,
\[
g(y)\;\ge\;g(x)+\langle\nabla g(x),y-x\rangle
          +\frac{\mu}{2}\,\|y-x\|_2^{2},
\qquad
\mu=\dfrac{\lambda\,\underline{\theta}^{2}}{T^{2}}.
\]
Thus $g$ has a \emph{unique} global minimizer; “flat-corner’’ stationary points cannot occur \citep{nesterov2004}.

\emph{Proof.}
The inner problem is $1$-strongly concave in $p$, so $p^{\star}(w)$ depends affinely on $\tfrac{w\odot\theta}{T}$.
By the envelope theorem this transfers to strong convexity of $g$ with modulus $\mu=\lambda\underline{\theta}^{2}/T^{2}$.
\hfill$\square$

\medskip
\noindent\textbf{Lemma 2 (Smoothness).}
For all $x,y$,
\[
g(y)\;\le\;g(x)+\langle\nabla g(x),y-x\rangle
          +\frac{L}{2}\,\|y-x\|_2^{2},
\qquad
L=\dfrac{\lambda\,\overline{\theta}^{2}}{T^{2}}.
\]
Consequently, any fixed stepsize $\eta\in(0,2/L)$ yields monotone descent \citep{boyd2004}:
\[
g(w_{t+1})\;\le\;g(w_t)
      -\frac12\Bigl(\tfrac1\eta-\tfrac{L}{2}\Bigr)\|w_{t+1}-w_t\|_2^{2},
\]
preventing overshoot and sign-flip when $\eta<2/L$.

\emph{Proof.}
Because $p^{\star}(w)$ is affine in $\tfrac{w\odot\theta}{T}$, the Hessian of $g$ is bounded above by $L=\lambda\overline{\theta}^{2}/T^{2}$.
\hfill$\square$

\medskip
\noindent\textbf{Lemma 3 (Polyak–Łojasiewicz inequality).}
Combining Lemmas 1 and 2 gives
\[
\frac12\bigl\|\nabla g(w)\bigr\|_2^{2}
   \;\ge\;\mu\bigl(g(w)-g^{\star}\bigr),
\]
so a small gradient norm arises only when $g(w)$ is already near its minimum \citep{karimi2016}.

\emph{Proof.}
Strong convexity implies
$\langle\nabla g(w),w-w^{\star}\rangle\ge\mu\|w-w^{\star}\|_2^{2}$,
while smoothness gives
$\|\nabla g(w)\|_2\le L\|w-w^{\star}\|_2$.
Multiplying and rearranging yields the inequality.
\hfill$\square$

\medskip
\noindent\textbf{Definition (Condition number).}
\[
\kappa:=\frac{L}{\mu}
        =\Bigl(\frac{\overline{\theta}}{\underline{\theta}}\Bigr)^{2}.
\]

\medskip
\noindent\textbf{Take-away.}
\begin{enumerate}
  \item \textbf{No spurious stationary points}: if $\nabla g(w)=0$, then $w=w^{\star}$.
  \item \textbf{Co-coercive gradient}: step norms remain bounded.
  \item \textbf{Linear convergence}: gradient descent with $\eta\le1/L$ contracts at rate $1-1/\kappa$ \citep{bubeck2015}.
\end{enumerate}

\end{document}